\tikzset{
    >=stealth',
    punkt/.style={
           circle,
           draw=black, thick,
           minimum height=2.7em,
           inner sep=0pt,
           text centered},
    pil/.style={
           ->,
           thick,
           shorten <=2pt,
           shorten >=2pt,}
}
\newtheorem{theorem}{Theorem}
\newtheorem{lemma}{Lemma}
\newcommand{\indep}{\mathrel{\text{\scalebox{1.07}{$\perp\mkern-10mu\perp$}}}}
\DeclareMathOperator*{\argmin}{arg\,min}
\DeclareMathOperator*{\E}{\mathbb{E}\,}
\renewcommand{\P}{\mathcal{P}}
\renewcommand{\H}{\mathcal{H}}
\newcommand{\U}{\mathcal{U}}
\newcommand{\F}{\mathcal{F}}
\newcommand{\M}{\mathscr{M}}
\newcommand{\Z}{\mathcal{Z}}
\newcommand{\N}{\mathcal{N}}
\newcommand{\Ex}{\mathcal{E}}
\newcommand{\sig}{\mathrm{sign}\circ\!}
\newcommand{\Hyp}{\mathcal{H}}
\newcommand{\Pio}{\mathbb{P}}
\newcommand{\R}{\mathbb{R}}
\newcommand{\B}{\mathcal{B}}
\renewcommand{\L}{\mathcal{L}}
\renewcommand{\d}{\mathrm{d}}
\newcommand{\G}{\mathcal{G}}
\newcommand{\Rp}{R_{\varphi}}
\newcommand{\Rpn}{\hat{R}_{\varphi}}
\newcommand{\Rpnt}{\tilde{R}_{\varphi}}
\newcommand{\f}{f^*}
\newcommand{\fn}{\hat{f}_n}
\newcommand{\fnt}{\tilde{f}_n}
\newcommand{\muP}{\mu_k(\P)}
\icmltitlerunning{Towards a Learning Theory of Cause-Effect Inference}
\begin{document} 
\twocolumn[
  \icmltitle{Towards a Learning Theory of Cause-Effect Inference}
  \icmlauthor{David Lopez-Paz$^{1,2}$}{david@lopezpaz.org}
  \icmlauthor{Krikamol Muandet$^1$}{krikamol@tuebingen.mpg.de}
  \icmlauthor{Bernhard Sch\"olkopf$^1$}{bs@tuebingen.mpg.de}
  \icmlauthor{Ilya Tolstikhin$^1$}{ilya@tuebingen.mpg.de}
  \icmladdress{$^1$Max-Planck-Institute for Intelligent Systems}
  \vskip -0.1in
  \icmladdress{$^2$University of Cambridge}
  \icmlkeywords{}
  \vskip 0.3in
]

\begin{abstract}
  We pose causal inference as the problem of learning to classify probability
  distributions. In particular, we assume access to a collection
  $\{(S_i,l_i)\}_{i=1}^n$, where each $S_i$ is a sample drawn from the
  probability distribution of $X_i \times Y_i$, and $l_i$ is a binary label
  indicating whether ``$X_i \to Y_i$'' or ``$X_i \leftarrow Y_i$''.  Given
  these data, we build a causal inference rule in two steps.  First, we
  featurize each $S_i$ using the kernel mean embedding associated with some
  characteristic kernel. Second, we train a binary classifier on such
  embeddings to distinguish between causal directions. We present
  generalization bounds showing the statistical consistency and learning rates
  of the proposed approach, and provide a simple implementation that achieves
  state-of-the-art cause-effect inference.  Furthermore, we extend our ideas to
  infer causal relationships between more than two variables.
\end{abstract}

\section{Introduction}\label{sec:intro}
The vast majority of statistical learning algorithms rely on the exploitation
of associations between the variables under study. Given the argument that all
associations arise from underlying causal structures \cite{Reichenbach56:Time},
and that different structures imply different influences between variables, the question
of how to infer and use causal knowledge in learning acquires great importance
\cite{Pearl00,Scholkopf12}.  Traditionally, the most widely used strategy to
infer the causal structure of a system is to perform interventions on some of
its variables, while studying the response of some others.  However, such
interventions are in many situations unethical, expensive, or even impossible
to realize.  Consequently, we often face the need of causal inference purely
from \emph{observational data}. In these scenarios, one suffers, in the absence
of strong assumptions, from the indistinguishability between
latent confounding ($X \leftarrow Z \rightarrow Y$) and direct causation ($X
\to Y$ or $X \leftarrow Y$).  Nevertheless, disregarding the impossibility of
the task, humans continuously learn from experience to accurately
infer causality-revealing patterns.  Inspired by this successful learning, and
in contrast to prior work, this paper addresses causal inference by 
unveiling such causal patterns directly from data. In particular, we assume
access to a set $\{(S_i,l_i)\}_{i=1}^n$, where each $S_i$ is a sample set drawn
from the probability distribution of $X_i \times Y_i$, and $l_i$ is a binary
label indicating whether ``$X_i \to Y_i$'' or ``$X_i \leftarrow Y_i$''.  Using
these data, we build a causal inference rule in two steps.  First, we construct
a suitable and nonparametric representation of each sample $S_i$. Second, we
train a nonlinear binary classifier on such features to distinguish between
causal directions.  Building upon this framework, we derive theoretical
guarantees regarding consistency and learning rates, extend inference to the
multivariate case, propose approximations to scale learning to big data, and
obtain state-of-the-art performance with a simple
implementation.

Given the ubiquity of uncertainty in data, which may arise from noisy
measurements or the existence of unobserved common causes, we adopt the
probabilistic interpretation of causation from \citet{Pearl00}. 
Under this interpretation, the causal structure underlying a set of random
variables $X = (X_1, \ldots, X_d)$, with joint distribution $P$, is often
described in terms of a Directed Acyclic Graph (DAG), denoted by $G = (V,E)$.
In this graph, each vertex $V_i \in V$ is associated to the random variable
$X_i \in X$, and an edge $E_{ji} \in E$ from $V_j$ to $V_i$ denotes 
the causal relationship ``$X_i \leftarrow X_j$''. More specifically, these causal
relationships are defined by a \emph{structural equation model}: each
$X_i \leftarrow f_i(\text{Pa}(X_i), N_i)$, where $f_i$ is a function, $\text{Pa}(X_i)$
is the parental set of $V_i \in V$, and
$N_i$ is some independent noise variable.
Then, causal inference is the task of recovering $G$ from $S \sim
P^n$.  

\subsection{Prior Art}

We now briefly review the state-of-the-art on the inference of causal
structures $G$ from observational data $S\sim P^n$. For a more thorough
exposition, see, e.g., \citet{Mooij14}.

One of the main strategies to recover $G$ is through the exploration of
conditional dependencies, together with some other technical assumptions such
as the \emph{Markov} and \emph{faithfulness} relationships between $P$ and $G$
\citep{Pearl00}. This is the case of the PC algorithm
\citep{Spirtes00}, which allows the recovery of the Markov equivalence class of
$G$ without placing any restrictions on the structural equation model
specifying the random variables under study. 

Causal inference algorithms that exploit conditional
dependencies are unsuited for inference in the bivariate case.  Consequently,
a large body of work has been dedicated to the study of this scenario.  First,
the linear non-Gaussian causal model \citep{Shimizu06,Shimizu11} recovers the
true causal direction between two variables whenever their relationship is
linear and polluted with additive and non-Gaussian noise. This model was later extended into 
nonlinear additive noise models
\citep{Hoyer09,Zhang09,Stegle10,Kpotufe13,Peters14:ANM}, which prefer the causal
direction under which the alleged cause is independent from the additive
residuals of some nonlinear fit to the alleged effect.  Third, the information
geometric causal inference framework \citep{Daniusis12,Janzing14} assumes that
the cause random variable is independently generated from some invertible and
deterministic mapping to its effect; thus, it is unlikely to find dependencies
between the density of the former and the slope of the latter, under the
correct causal direction.

As it may be inferred from the previous exposition, there exists a large and
heterogeneous array of causal inference algorithms, each of them working under
a very specialized set of assumptions, which are sometimes difficult to test in
practice. Therefore, there exists the need for a more flexible causal inference
rule, capable of learning the relevant \emph{causal
footprints}, later used for inference, directly from data. Such a ``data driven'' approach would
allow to deal with complex data-generating processes, and would greatly reduce 
the need of explicitly crafting identifiability conditions a-priori.

A preliminary step in this direction distilled from the
competitions organized by \citet{Kaggle13,Codalab14}, which phrased causal
inference as a learning problem.  In these competitions, the participants were
provided with a large collection of \emph{cause-effect samples}
$\{(S_i,l_i)\}_{i=1}^n$, where $S_i = \{(x_{ij},y_{ij})\}_{j=1}^{n_i}$ is drawn
from the probability distribution of $X_i \times Y_i$, and $l_i$ is a binary
label indicating whether ``$X_i \to Y_i$'' or ``$Y_i \to X_i$''. Given these
data, most participants adopted the strategy of i) crafting a vector of
features from each $S_i$, and ii) training a binary classifier on top of the
constructed features and paired labels. Although these ``data-driven'' methods
achieved state-of-the-art performance \citep{Kaggle13}, the laborious task of
hand-crafting features renders their theoretical analysis impossible.

In more specific terms, the approach described above is a learning problem with
inputs being sample sets $S_i$, where each $S_i$ contains samples drawn from the
probability distribution $P_i(X_i,Y_i)$.  In a separate strand of research,
there has been several attempts to learn from probability distributions in a
principled manner \citep{Jebara04,Hein04,Cuturi05,Martins09,Muandet12}.
\citet{Szabo14b} presented the first theoretical analysis of distributional
learning based on kernel mean embedding \citep{Smola07Hilbert}, with focus on
kernel ridge regression. Similarly, \citet{Muandet12} studied the problem of
classifying distributions, but their approach is constrained to kernel
machines, and no guarantees regarding consistency or learning rates are
provided.

\subsection{Our Contribution}

Inspired by Guyon's competitions, we pose causal inference as the problem of
classifying probability measures on causally related pairs of random variables.
Our contribution to this framework is the use of \emph{kernel mean embeddings}
to nonparametrically {featurize} each cause-effect sample $S_i$.  The benefits
of this approach are three-fold. First, this avoids the need of
hand-engineering features from the samples $S_i$. Second, this enables a
clean theoretical analysis, including provable learning rates and consistency
results.  Third, the kernel hyperparameters (that is, the data representation)
can be jointly optimized with the classifier using cross-validation.
Furthermore, we show how to extend these ideas to infer causal relationships
between $d \geq 2$ variables, give theoretically sustained approximations to scale
learning to big data, and provide the source code of a simple implementation
that outperforms the state-of-the-art.

The rest of this article is organized as follows. Section~\ref{sec:embeddings}
reviews the concept of kernel mean embeddings, the tool that will facilitate
learning from distributions.  Section~\ref{sec:theory} shows the consistency
and learning rates of our kernel mean embedding classification approach to
cause-effect inference.  Section~\ref{sec:dags} extends the presented 
ideas to the multivariate causal inference case. Section~\ref{sec:exps}
presents a variety of experiments displaying the state-of-the-art performance
of a simple implementation of the proposed framework.
For convenience, Table~\ref{table:notation} summarizes our notations.
\begin{table}[h!]\footnotesize
  \begin{center}
    \resizebox{\columnwidth}{!}{  
    \begin{tabular}{|l|l|}
    \hline
    $\mathbb{E}[\xi], \mathbb{V}[\xi]$ & Expected value and variance of r.v. $\xi$\\
    \hline\hline
     $\Z$ & Domain of cause-effect pairs $Z = (X,Y)$\\
     \hline
     $\P$ & Set of cause-effect measures $P$ on $\Z$\\
     \hline
     $\L$ & Set of labels $l_i\in\{-1,1\}$\\
     \hline
     $\M$ & Mother distribution over $\P\times\L$\\
     \hline
     $\{(P_i,l_i)\}_{i=1}^n$ & Sample from $\M^n$\\
     \hline
     $S_i\!=\!\{Z_{ij}\}_{j=1}^{n_i}$ & Sample from $P_i^{n_i}$\\
     \hline
     $P_{S_i}$ & Empirical distribution of $S_i$\\
     \hline\hline
     $k$ & Kernel function from $\Z \times \Z$ to $\R$\\
     \hline
     $\H_k$ & RKHS induced by $k$\\
     \hline
     $\mu_k(P)$ & Kernel mean embedding of measure $P\in \P$\\
     \hline
     $\mu_k(P_{s_i})$ & Empirical mean embedding of $P_{s_i}$\\
     \hline
     $\mu_k(\P)$ & The set $\{ \mu_k(P) : P \in \P \}$\\
     \hline
     $\M_k$ & Measure over $\mu_k(\P) \times \L$ induced by $\M$\\
     \hline\hline
	   $\F_k$ & Class of functionals mapping $\H_k$ to $\R$\\
     \hline
     $R_n(\F_k)$ & Rademacher complexity of class $\F_k$\\
     \hline
     $\varphi$, $\Rp(f)$ & Cost and surrogate $\varphi$-risk of $\sig f$\\
     \hline
    \end{tabular}
    }
  \end{center}
  \vskip -0.25 cm
  \caption{Table of notations}
  \label{table:notation}
\end{table}

\section{Kernel Mean Embeddings of Probability Measures}
\label{sec:embeddings}

In order to later classify probability measures $P$ according to their causal
properties, we first need to featurize them into a suitable representation. To
this end, we will rely on the concept of \emph{kernel mean embeddings}
\cite{Berlinet04:RKHS,Smola07Hilbert}.

In particular, let $P$ be the probability distribution of some random variable
$Z$ taking values in the separable topological space $(\Z,\tau_z)$.  Then, the
\emph{kernel mean embedding} of $P$ associated with the continuous, bounded,
and positive-definite kernel function $k : \Z \times \Z \to \R$ is 
\begin{equation}
  \label{eq:meank}
  \mu_k(P) := \int_{\Z} k( z, \cdot) \, \d P(z),
\end{equation}
which is an element in $\H_k$, the Reproducing Kernel Hilbert Space (RKHS) associated with $k$
\cite{Schoelkopf02}.  Interestingly, the mapping $\mu_k$ is injective if $k$ is
a characteristic kernel \citep{Sriperumbudur10:Metrics}, that is,
$\|\mu_k(P)-\mu_k(Q)\|_{\H_k}=0 \Leftrightarrow P=Q$. Said differently, if
using a characteristic kernel, we do not lose any information when embedding
distributions. An example of characteristic kernel is the Gaussian kernel
\begin{equation}\label{eq:gauss}
  k(z,z') = \exp\left(-\gamma \|z-z'\|_2^2\right), \,\, \gamma > 0,
\end{equation}
which will be used throughout this paper.

In many practical situations, it is unrealistic to assume access to the true
distribution $P$, and consequently to the true embedding $\mu_k(P)$.  Instead,
we often have access to a sample $S = \{z_i\}_{i=1}^n \sim P^n$, which can be
used to construct the empirical distribution $P_S := \frac{1}{n} \sum_{z_i\in
S} \delta_{(z_i)}$, where $\delta_{(z)}$ is the Dirac distribution centered at
$z$.  Using $P_S$, we can approximate \eqref{eq:meank} by the \emph{empirical
kernel mean embedding}
\begin{equation}
  \label{eq:meank2} 
  \mu_k(P_S) := \frac{1}{n} \sum_{i=1}^n k( z_i, \cdot) \in \H_k.
\end{equation}
The following result is a slight modification of Theorem 27 from \cite{S06}.
It establishes the convergence of the empirical mean embedding $\mu_k(P_S)$ to
the embedding of its population counterpart $\mu_k(P)$, in RKHS norm:
\begin{theorem}
\label{thm:LeSong}
Assume that $\|f\|_{\infty}\leq 1$ for all $f\in \H_k$ with $\|f\|_{\H_k}\leq
1$.  Then with probability at least $1-\delta$ we have
\[
\|\mu_k(P)-\mu_k(P_S)\|_{\H_k}
\leq
2\sqrt{\frac{\E_{z\sim P}[k(z,z)]}{n}} + \sqrt{\frac{2\log\frac{1}{\delta}}{n}}.
\]
\end{theorem}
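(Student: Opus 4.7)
The plan is a standard concentration argument: control the deviation of the random quantity $g(S) := \|\mu_k(P) - \mu_k(P_S)\|_{\H_k}$ from its expectation via McDiarmid, then separately upper bound the expectation via Jensen's inequality or a one-sided symmetrization.

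First I would unpack the hypothesis $\|f\|_\infty \leq 1$ for all $f \in \H_k$ with $\|f\|_{\H_k} \leq 1$. By the reproducing property, $k(z,z) = \|k(z,\cdot)\|_{\H_k}^2$ and $f(z) = \langle f, k(z,\cdot)\rangle_{\H_k}$, so the hypothesis forces $k(z,z) \leq 1$ uniformly in $z \in \Z$. This uniform boundedness is the key structural fact driving both steps.

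Next, view $g$ as a function of the $n$ i.i.d.\ arguments $z_1,\ldots,z_n$ and check bounded differences. Replacing $z_i$ by $z_i'$ changes $\mu_k(P_S)$ by $\tfrac{1}{n}(k(z_i',\cdot) - k(z_i,\cdot))$, and by the triangle inequality together with $\|k(z,\cdot)\|_{\H_k} = \sqrt{k(z,z)} \leq 1$ this perturbation has $\H_k$-norm at most $2/n$. Since $g$ itself is $1$-Lipschitz in its argument in $\H_k$-norm, we get $c_i = 2/n$. McDiarmid's inequality then yields, with probability at least $1 - \delta$,
\[
g(S) \leq \E[g(S)] + \sqrt{\frac{2\log(1/\delta)}{n}}.
\]

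It remains to bound $\E[g(S)]$. Writing $\xi_i := k(z_i,\cdot) - \mu_k(P) \in \H_k$, these are i.i.d.\ centered, so
\[
\E\bigl\|\tfrac{1}{n}\textstyle\sum_i \xi_i\bigr\|_{\H_k}^2 = \tfrac{1}{n}\E\|\xi_1\|_{\H_k}^2 = \tfrac{1}{n}\bigl(\E[k(z,z)] - \|\mu_k(P)\|_{\H_k}^2\bigr) \leq \tfrac{1}{n}\E[k(z,z)].
\]
Jensen's inequality then gives $\E[g(S)] \leq \sqrt{\E[k(z,z)]/n}$, which is in fact tighter than the claimed $2\sqrt{\E[k(z,z)]/n}$; alternatively one can apply a one-sided symmetrization followed by Jensen to recover the stated constant $2$ as in Song's original argument. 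Either way, substituting into the McDiarmid bound completes the proof.

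The main obstacle is not technical but conceptual: one must recognize that the assumption on $\H_k$ translates into the pointwise bound $k(z,z) \leq 1$, without which the bounded-differences constants would depend on the data. Everything else is routine Hilbert-space arithmetic and a textbook application of McDiarmid.
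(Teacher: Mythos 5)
Your proof is correct, and the skeleton (McDiarmid plus a bound on the expectation) matches the paper's, but you bound the expectation by a genuinely different and more elementary route. The paper first rewrites $\|\mu_k(P)-\mu_k(P_S)\|_{\H_k}$ via RKHS duality as $\sup_{\|f\|_{\H_k}\le 1}\bigl(\E_{z\sim P}[f(z)]-\tfrac1n\sum_i f(z_i)\bigr)$, establishes the $2/n$ bounded-difference property for that supremum using $\|f\|_\infty\le 1$, and then controls its expectation by symmetrization followed by the standard Rademacher-complexity bound for the unit ball of $\H_k$ (Bartlett--Mendelson), which is where the factor $2$ comes from. You instead work directly with the Hilbert-space norm: the bounded-difference step is the same computation in disguise (reverse triangle inequality plus $\|k(z,\cdot)\|_{\H_k}=\sqrt{k(z,z)}\le 1$, which you correctly extract from the hypothesis), and for the expectation you compute the second moment of the centered i.i.d.\ sum $\tfrac1n\sum_i\bigl(k(z_i,\cdot)-\mu_k(P)\bigr)$ and apply Jensen, obtaining $\sqrt{\E[k(z,z)]/n}$. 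This avoids symmetrization and Rademacher complexities entirely and in fact yields a constant of $1$ rather than the stated $2$, so it proves a slightly stronger statement; the paper's route is the one that generalizes to suprema over function classes without a Hilbert-space structure, but for this particular quantity your direct variance calculation is cleaner and sharper.
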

\begin{proof}See Section~\ref{proof:LeSong}.\end{proof}

\section{A Theory of Causal Inference as Distribution Classification}\label{sec:theory}
This section phrases the inference of cause-effect relationships from 
probability measures as the classification of empirical kernel mean embeddings,
and analyzes the learning rates and consistency of such approach.  Throughout
our exposition, the setup is as follows:
\begin{enumerate}
  \item We assume the existence of some \emph{Mother distribution} $\M$,
  defined on $\P \times \L$, where $\P$ is the set of all Borel probability
  measures on the space $\Z$ of two causally related random variables, and $\L
  = \{-1,+1\}$. 

  \item A set $\{(P_i, l_i)\}_{i=1}^n$ is sampled from $\M^n$. Each
  measure $P_i \in \P$ is the joint distribution of the causally related random
  variables $Z_i = (X_i, Y_i)$, and the label $l_i \in \L$ indicates whether
  ``$X_i \to Y_i$'' or ``$X_i \leftarrow Y_i$''.
  
  \item In practice, we do not have access to the measures $\{P_i\}_{i=1}^n$.
  Instead, we observe samples $S_i = \{(x_{ij}, y_{ij})\}_{j=1}^{n_i} \sim
  P_i^{n_i}$, for all $1 \leq i \leq n$. Using $S_i$, the data
  $\{(S_i,l_i)\}_{i=1}^n$ is provided to the learner. 

  \item We featurize every sample $S_i$ into the empirical kernel mean
  embedding $\mu_k(P_{S_i})$ associated with some kernel function $k$
  (Equation~\ref{eq:meank2}). If $k$ is a characteristic kernel, we incur no
  loss of information in this step.
\end{enumerate}

Under this setup, we will use the set $\{(\mu_k(P_{S_i}),l_i)\}_{i=1}^n$ to
train a binary classifier from $\H_k$ to $\L$, which will later be used to
unveil the causal directions of new, unseen probability measures drawn from
$\M$.  Note that this framework can be straightforwardly extended to also infer
the ``confounding ($X \leftarrow Z \rightarrow Y$)'' and ``independent ($X
\indep Y$)'' cases by adding two extra labels to~$\mathcal{L}$.

Given the two nested levels of sampling (being the first one from the Mother
distribution $\M$, and the second one from each of the drawn cause-effect measures
$P_i$), it is not trivial to conclude whether this learning procedure is consistent, or how
its learning rates depend on the sample sizes $n$ and $\{n_i\}_{i=1}^n$.  In
the following, we will study the generalization performance of empirical risk
minimization over this learning setup.  Specifically, we are interested in
upper bounding the {excess risk} between the empirical risk minimizer and
the best classifier from our hypothesis class, with respect to the Mother
distribution $\M$.

We divide our analysis in three parts.  First, \S\ref{sec:classic} reviews~the
abstract setting of statistical learning theory and surrogate risk
minimization.  Second, \S\ref{sec:distrib} adapts these standard results to the
case of empirical kernel mean embedding classification.  Third,
\S\ref{sec:random} considers theoretically sustained approximations to deal with big data.

\subsection{Margin-based Risk Bounds in Learning Theory}
\label{sec:classic}
Let $\Pio$ be some unknown probability measure defined on $\Z \times \L$, where
$\Z$ is referred to as the \emph{input space}, and $\L = \{-1,1\}$ is referred
to as the \emph{output space}\footnote{Refer to Section~\ref{sec:measurability} for considerations on measurability.}.
One of the main goals of statistical learning theory
\cite{Vap98} is to find a classifier $h\colon \Z\to\L$ that minimizes the
\emph{expected risk}
\begin{equation*}
  R(h) = \E_{(z,l)\sim \Pio}\bigl[\ell\bigl(h(z),l\bigr)\bigr]
\end{equation*}
for a suitable \emph{loss function} $\ell\colon\L \times \L\to\R^+$,~which 
  penalizes departures between predictions $h(z)$ and true labels~$l$.
  For classification, one common choice of loss function is the \emph{0-1 loss}
  $\ell_{01}(l,l')=|l-l'|$, for which the expected risk measures the
  probability of misclassification.  Since~$\Pio$ is unknown in natural
  situations, one usually resorts to the minimization of the \emph{empirical
  risk} $\frac{1}{n}\sum_{i=1}^n\ell\bigl(h(z_i),l_i\bigr)$ over some fixed
  hypothesis class $\H$, for \emph{the training set} $\{(z_i,l_i)\}_{i=1}^n
  \sim \Pio^n$.  It is well known that this procedure~is consistent under mild~assumptions \cite{BBL05}.

Unfortunately, the 0-1 loss function is not convex, which leads to empirical
risk minimization being generally intractable.  Instead, we will focus on the
minimization of surrogate risk functions \cite{BJM06}.  In particular, we will
consider the set of classifiers of the form $\Hyp = \{{\sig f}\colon
{f\in\F}\}$ where $\F$ is some fixed set of real-valued functions ${f\colon
\Z\to\R}$.  Introduce a nonnegative \emph{cost function} $\varphi\colon
\R\to\R^+$ which is surrogate to the 0-1 loss, that is, $\varphi(\epsilon) \geq
\mathbbm{1}_{\epsilon>0}$.  For any $f\in \F$ we define its expected and
empirical $\varphi$-risks respectively as
\begin{equation}\label{eq:phirisk1}
  \Rp(f) = \E_{(z,l)\sim \Pio}\bigl[\varphi\bigl(-f(z) l\bigr)\bigr],
\end{equation}
\begin{equation}\label{eq:phirisk2}
  \Rpn(f) = \frac{1}{n}\sum_{i=1}^n \varphi\bigl(-f(z_i) l_i\bigr).
\end{equation}
Many natural choices of $\varphi$ lead to tractable empirical risk
minimization.  Common examples of cost functions include the \emph{hinge loss}
$\varphi(\epsilon)=\max(0,1+\epsilon)$ used in SVM, the \emph{exponential loss}
$\varphi(\epsilon)=\exp(\epsilon)$ used in Adaboost, and the \emph{logistic
loss} $\varphi(\epsilon)=\log_2\bigl(1+e^\epsilon)$ used in logistic
regression.

The misclassification error of $\sig f$ is always upper bounded by $\Rp(f)$.
The relationship between functions minimizing $\Rp(f)$ and functions minimizing
${R(\sig f)}$ has been intensively studied in the literature \citep[Chapter
3]{Steinwart08}.  Given the high uncertainty associated with causal inferences,
we argue that one is interested in predicting soft probabilities rather than
hard labels, a fact that makes the study of margin-based classifiers well
suited for our problem.

We now focus on the estimation of $f^*\in\F$, the function minimizing
\eqref{eq:phirisk1}.  However, since the distribution $\Pio$ is unknown, we can
only hope to estimate $\hat{f}_n\in\F$, the function minimizing
\eqref{eq:phirisk2}.  Therefore, we are interested in high-probability upper
bounds on the
\emph{excess $\varphi$-risk}
\begin{equation}
\label{eq:excess}
\Ex_{\F}(\fn) =
\Rp(\fn) - \Rp(\f),
\end{equation}
w.r.t. the random training sample $\{(z_i,l_i)\}_{i=1}^n \sim \Pio^n$.  
The excess risk \eqref{eq:excess} can be upper bounded in the following way:
\begin{align}
\Ex_{\F}(\fn)
\notag
&\leq
\Rp(\fn) - \Rpn(\fn)
+
\Rpn(\f) - \Rp(\f)\\
\label{eq:SLT-Bad}
&\leq
2\sup_{f\in\F}|\Rp(f) - \Rpn(f)|.
\end{align}
While this upper bound leads to tight results for worst case analysis, it is
well known \cite{BBM05,BBL05,K11} that tighter bounds can be achieved under
additional assumptions on $\Pio$.  However, we leave these analyses for future
research.

The following result --- in spirit of \citet{KP99,BM01} --- can be found in
\citet[Theorem 4.1]{BBL05}.
\begin{theorem}
\label{thm:classic}
Consider a class $\F$ of functions mapping $\Z$ to~$\R$.  Let $\varphi\colon
\R\to\R^+$ be a $L_\varphi$-Lipschitz function such that $\varphi(\epsilon)
\geq \mathbbm{1}_{\epsilon>0}$.  Let $B$ be a uniform upper bound on
$\varphi\bigl(-f(\epsilon) l\bigr)$.  Let $\{(z_i,l_i)\}_{i=1}^n \sim \Pio$ and
$\{\sigma_i\}_{i=1}^n$ be i.i.d. Rademacher random signs.  Then, with
prob. at least $1-\delta$,
\begin{multline*}
\sup_{f\in\F}|\Rp(f) - \Rpn(f)|\\
\leq
2 L_\varphi \E\left[\sup_{f\in \F}\frac{1}{n}\left|\sum_{i=1}^n \sigma_i f(z_i)\right|\right] + B\sqrt{\frac{\log(1/\delta)}{2n}},
\end{multline*}
where the expectation is taken w.r.t. $\{\sigma_i,z_i\}_{i=1}^n$.
\end{theorem}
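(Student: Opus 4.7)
The plan is to combine the two standard tools of empirical process theory: McDiarmid's bounded differences inequality for concentration, and a symmetrization plus Ledoux--Talagrand contraction argument for the expectation of the supremum. The resulting proof follows the template of \citet[Theorem 4.1]{BBL05}.

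First, I would view $\Phi := \sup_{f\in\F}|\Rp(f)-\Rpn(f)|$ as a function of the i.i.d. training sample. Since $0 \le \varphi(-f(z)l) \le B$ uniformly over $\F$, replacing any single pair $(z_i,l_i)$ changes $\Rpn(f)$, and hence $\Phi$, by at most $B/n$. McDiarmid's inequality then gives, with probability at least $1-\delta$,
\[
\Phi \le \E[\Phi] + B\sqrt{\frac{\log(1/\delta)}{2n}},
\]
which produces the additive tail term in the statement.

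Next, I would bound $\E[\Phi]$ by symmetrization. Let $\{(z_i',l_i')\}_{i=1}^n$ be an independent ghost sample from $\Pio$, so that $\Rp(f) = \E\bigl[\tfrac{1}{n}\sum_i \varphi(-f(z_i')l_i')\bigr]$. Pulling the supremum inside the expectation via Jensen, and then inserting independent Rademacher signs $\sigma_i$ (which preserves the distribution of each symmetric difference), gives
\[
\E[\Phi] \le 2\,\E\Big[\sup_{f\in\F}\Big|\tfrac{1}{n}\sum_{i=1}^n \sigma_i\, \varphi(-f(z_i)l_i)\Big|\Big].
\]
The Ledoux--Talagrand contraction principle for Rademacher processes then strips off the $L_\varphi$-Lipschitz $\varphi$ at the cost of a factor $L_\varphi$, and because $\sigma_i$ is independent of $l_i\in\{-1,+1\}$, the product $\sigma_i l_i$ is again a Rademacher sign, producing precisely the Rademacher average appearing in the theorem.

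The one place requiring care is the contraction step, since the classical Ledoux--Talagrand statement bounds the \emph{signed} Rademacher supremum whereas I have used absolute values; to obtain the constant exactly $2L_\varphi$ I would split $|a|=\max(a,-a)$ and apply the signed contraction to each half before taking a union bound. The remaining pieces---bounded differences for McDiarmid and the ghost-sample/Rademacher-sign manipulation---are entirely standard, and combining the two high-probability statements yields the claim.
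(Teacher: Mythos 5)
The paper itself offers no proof of this theorem---it is quoted verbatim from \citet[Theorem 4.1]{BBL05}---and your McDiarmid / ghost-sample symmetrization / contraction architecture is precisely the argument of that source, so the overall route is the intended one. The McDiarmid step (bounded differences $B/n$, yielding the $B\sqrt{\log(1/\delta)/(2n)}$ tail) and the symmetrization step (factor $2$, absolute values preserved) are correct as you state them.

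The step you yourself flag as delicate is, however, where the argument as written does not close, and your proposed repair does not deliver the claimed constant. Write $A=\sup_{f\in\F}\frac{1}{n}\sum_i\sigma_i\varphi(-f(z_i)l_i)$ and let $A'$ be the same quantity with $-\sigma_i$, so that the two-sided supremum is $\max(A,A')$. A ``union bound'' is a statement about probabilities; the corresponding inequality for expectations, $\E[\max(A,A')]\leq \E[A^+]+\E[(A')^+]$, costs a factor of $2$, so applying the one-sided contraction to each half produces $4L_\varphi\,\E\bigl[\sup_{f}\frac{1}{n}|\sum_i\sigma_i f(z_i)|\bigr]$, not $2L_\varphi(\cdots)$ --- the same constant you would get by invoking the two-sided Ledoux--Talagrand inequality directly (and the passage from $\E[A]$ to $\E[A^+]$ needs its own justification). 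A second issue is that both forms of the contraction principle are stated for $\varphi(0)=0$, which fails for, e.g., the hinge loss; in the two-sided version the recentering term $\frac{\varphi(0)}{n}\bigl|\sum_i\sigma_i\bigr|$ has expectation of order $\varphi(0)/\sqrt{n}$ and does not disappear, whereas in the one-sided version it has mean zero and is harmless. The clean repair is therefore to run your entire chain --- McDiarmid, one-sided symmetrization, one-sided contraction (no factor $2$, no condition at $0$), and $\sigma_il_i\overset{d}{=}\sigma_i$ --- separately for $\sup_f(\Rp(f)-\Rpn(f))$ and $\sup_f(\Rpn(f)-\Rp(f))$, each at confidence $1-\delta/2$, and combine the two events with a union bound; each one-sided Rademacher average is dominated by the two-sided one, giving the stated bound with $\log(2/\delta)$ in place of $\log(1/\delta)$. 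This mild degradation (or, alternatively, the constant $4L_\varphi$) is a known blemish of the statement as it appears in \citet{BBL05} and is immaterial to how Theorem~\ref{thm:risk-bound} uses it, but as written your contraction step would not yield the inequality claimed.
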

The expectation in the bound of Thm.~\ref{thm:classic} is known~as \emph{the
Rademacher complexity} of $\F$, will be denoted by $R_n(\F)$, and has a typical
order of $O(n^{-1/2})$ \cite{K11}.

\subsection{From Classic to Distributional Learning Theory}\label{sec:distrib}

Note that we can not directly apply the empirical risk minimization bounds
discussed in the previous section to our learning setup.  This is because
instead of learning a classifier on the i.i.d.  sample
$\{\mu_k(P_i),l_i\}_{i=1}^n$, we have to learn over the set
$\{\mu_k(P_{S_i}),l_i\}_{i=1}^n$, where $S_i \sim P_i^{n_i}$.  Said
differently, our input feature vectors $\mu_k(P_{S_i})$ are ``noisy'': they
exhibit an additional source of variation as any two different random samples
$S_i,S_i'\sim P_i^{n_i}$ do.  In the following, we study how to incorporate
these nested sampling effects into an argument similar to
Theorem~\ref{thm:classic}.

We will now frame our problem within the abstract learning setting considered
in the previous section.  Recall that our learning setup initially considers
some Mother distribution $\M$ over $\P \times \L$.  Let $\mu_k(\P) = \{\mu_k(P)
: P \in \P \} \subseteq \H_k$, $\L = \{-1,+1\}$, and $\M_k$ be a
measure (guaranteed to exist by
Lemma~\ref{lemma:measurability}, 
Section~\ref{sec:measurability_distrib}) on $\mu_k(\P)
\times \L$ induced by $\M$.  Specifically, we will consider $\muP\subseteq
\H_k$ and $\L$ to be the input and output spaces of our learning problem,
respectively.  Let $\bigl\{\bigl(\mu_k(P_i),l_i\bigr)\bigr\}_{i=1}^n\sim
\M_k^n$ be our training set.  We will now work with the set of classifiers
$\{\sig f\colon f\in \F_k\}$ for some fixed class $\F_k$ of functionals mapping
from the RKHS $\H_k$ to $\R$.

As pointed out in the description of our learning setup, we do not have 
access to the distributions $\{P_i\}_{i=1}^n$ but to samples $S_i \sim
P_i^{n_i}$, for all $1 \leq i \leq n$.  Because of this reason, we define the
\emph{sample-based empirical $\varphi$-risk}
\[
\Rpnt(f) = \frac{1}{n}\sum_{i=1}^n \varphi\bigl(-l_i f\bigl(\mu_k(P_{S_i})\bigr)\bigr),
\]
which is the approximation to the empirical $\varphi$-risk $\Rpn(f)$ that
results from substituting the embeddings $\mu_k(P_i)$ with their empirical
counterparts $\mu_k(P_{S_i})$.

Our goal is again to find the function $\f\in\F_k$ minimizing expected
$\varphi$-risk $\Rp(f)$.  Since $\M_k$ is unknown to us, and we have no access
to the embeddings $\{\mu_k(P_i)\}_{i=1}^n$, we will instead use the minimizer
of $\Rpnt(f)$ in $\F_k$:
\begin{equation}
\label{eq:sample-based-erm}
\fnt \in \arg\min_{f\in \F_k} \Rpnt(f).
\end{equation}
To sum up, the excess risk \eqref{eq:excess} can now be reformulated as
\begin{equation}
\label{eq:excess2}
\Rp(\fnt) - \Rp(\f).
\end{equation}
Note that the estimation of $\f$ drinks from two nested sources of error, which
are i) having only $n$ training samples from the distribution $\M_k$, and ii)
having only $n_i$ samples from each measure $P_i$.  Using a similar technique
to \eqref{eq:SLT-Bad}, we can upper bound the excess risk as
\begin{align}
\label{eq:excess-term1}
\Rp(\fnt) - \Rp(\f)
&\leq
\sup_{f\in\F_k}|\Rp(f) - \Rpn(f)|\\
\label{eq:excess-term2}
&+
\sup_{f\in\F_k}|\Rpn(f) - \Rpnt(f)|.
\end{align}
The term \eqref{eq:excess-term1} is upper bounded by Theorem~\ref{thm:classic}.
On the other hand, to deal with \eqref{eq:excess-term2}, we will need to upper
bound the deviations
$\bigl|f\bigl(\mu_k(P_i)\bigr)-f\bigl(\mu_k(P_{S_i})\bigr)\bigr|$ in terms of
the distances $\|\mu_k(P_i) - \mu_k(P_{S_i})\|_{\H_k}$, which are in turn upper
bounded using Theorem \ref{thm:LeSong}.  To this end, we will have to assume
that the class $\F_k$ consists of functionals with uniformly bounded Lipschitz
constants,  such as the set of linear
functionals with uniformly bounded operator norm.

We now present the main result of this section, which provides a
high-probability bound on the excess risk \eqref{eq:excess2}. Importantly, this
excess risk will translate into the expected causal inference accuracy of our
distribution classifier.

\begin{theorem}
\label{thm:risk-bound}
Consider the RKHS $\H_k$ associated with some bounded, continuous
kernel function $k$, such that $\sup_{z \in \Z} k(z,z)\leq 1$.
Consider a class $\F_k$ of functionals mapping $\H_k$ to~$\R$ with Lipschitz
constants uniformly bounded by $L_{\F}$.  Let $\varphi\colon \R\to\R^+$ be a
$L_{\varphi}$-Lipschitz function such that $\phi(z) \geq \mathbbm{1}_{z>0}$.
Let $\varphi\bigl(-f(h) l\bigr) \leq B$ for every $f\in\F_k$, $h \in \H_k$, and
$l\in\L$.  Then, with probability not less than $1-\delta$ (over all sources of
randomness) 
\begin{align*}
&\Rp(\fnt) - \Rp(\f)
\leq
4 L_\varphi R_n(\F_k) + 2B\sqrt{\frac{\log(2/\delta)}{2n}}\\
&+
\frac{4L_\varphi L_{\F}}{n}
\sum_{i=1}^n
\left(
\sqrt{\frac{\E_{z\sim P_i}[k(z,z)]}{n_i}} + \sqrt{\frac{\log(2n/\delta)}{2n_i}}
\right).
\end{align*}
\end{theorem}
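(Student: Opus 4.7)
The plan is to follow the decomposition already signposted in equations~(\ref{eq:excess-term1})--(\ref{eq:excess-term2}). Writing
\[
\Rp(\fnt)-\Rp(\f) = \bigl[\Rp(\fnt)-\Rpnt(\fnt)\bigr] + \bigl[\Rpnt(\fnt)-\Rpnt(\f)\bigr] + \bigl[\Rpnt(\f)-\Rp(\f)\bigr],
\]
dropping the middle term (which is $\leq 0$ by optimality of $\fnt$ for $\Rpnt$) and inserting $\pm \Rpn(f)$ inside each of the two remaining differences gives
\[
\Rp(\fnt)-\Rp(\f) \leq 2\sup_{f\in\F_k}|\Rp(f)-\Rpn(f)| + 2\sup_{f\in\F_k}|\Rpn(f)-\Rpnt(f)|.
\]
I would then control the two suprema separately, requiring each event to hold with probability at least $1-\delta/2$ and combining via a final union bound.

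For the first supremum, I would apply Theorem~\ref{thm:classic} verbatim to the i.i.d.\ sample $\{(\mu_k(P_i),l_i)\}_{i=1}^n\sim \M_k^n$ with confidence parameter $\delta/2$. The assumptions transfer directly: the Rademacher complexity becomes $R_n(\F_k)$, and $\varphi(-f(h)l)$ is uniformly bounded by $B$ by hypothesis. This contributes $4L_\varphi R_n(\F_k) + 2B\sqrt{\log(2/\delta)/(2n)}$ after multiplying by the factor $2$ from the excess-risk decomposition, which is exactly the first line of the stated bound.

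For the second supremum I would exploit the two Lipschitz assumptions. Termwise, the $L_\varphi$-Lipschitzness of $\varphi$ and the $L_\F$-uniform Lipschitzness of the functionals in $\F_k$ give
\[
|\Rpn(f)-\Rpnt(f)| \leq \frac{L_\varphi L_\F}{n}\sum_{i=1}^n \|\mu_k(P_i)-\mu_k(P_{S_i})\|_{\H_k}.
\]
At this point I must verify the premise of Theorem~\ref{thm:LeSong}: by the reproducing property and Cauchy--Schwarz, $|f(z)|\leq \|f\|_{\H_k}\sqrt{k(z,z)}$, so the hypothesis $\sup_z k(z,z)\leq 1$ delivers exactly $\|f\|_\infty\leq 1$ on the unit ball of $\H_k$ as required. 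I would then invoke Theorem~\ref{thm:LeSong} for each index $i$ at confidence $\delta/(2n)$ and union-bound over the $n$ inner samples, obtaining simultaneously for all $i$ with probability at least $1-\delta/2$
\[
\|\mu_k(P_i)-\mu_k(P_{S_i})\|_{\H_k} \leq 2\sqrt{\frac{\E_{z\sim P_i}[k(z,z)]}{n_i}} + 2\sqrt{\frac{\log(2n/\delta)}{2n_i}},
\]
after rewriting $\sqrt{2\log(2n/\delta)/n_i}=2\sqrt{\log(2n/\delta)/(2n_i)}$. Plugging back, averaging over $i$, and multiplying by the remaining $2L_\varphi L_\F$ exactly reproduces the second line of the claim; a final union bound over the two $\delta/2$ events concludes.

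The argument is thus largely mechanical once the correct three-term decomposition is in place. The only genuinely delicate point is the Lipschitz step translating embedding error into function-value error, which is precisely what forces the hypothesis class to carry a uniform Lipschitz constant $L_\F$ and explains why this constant enters the bound multiplicatively with the mean-embedding concentration rate of Theorem~\ref{thm:LeSong}. The two-level sampling structure surfaces only through the average $\frac{1}{n}\sum_{i=1}^n n_i^{-1/2}$ and through the $\log(2n/\delta)$ factor collected from the union bound over the $n$ inner samples.
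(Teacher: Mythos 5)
Your proposal is correct and follows essentially the same route as the paper's proof: the same three-term decomposition with the middle term dropped by optimality of $\fnt$, Theorem~\ref{thm:classic} applied at confidence $\delta/2$ to the i.i.d.\ sample $\{(\mu_k(P_i),l_i)\}_{i=1}^n$ for the first supremum, and the double Lipschitz argument plus Theorem~\ref{thm:LeSong} with a union bound at level $\delta/(2n)$ for the second. The constants and the $\log(2n/\delta)$ rewriting all check out, so there is nothing further to add.
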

\begin{proof}See Section \ref{sect:ProofRiskBound}.\end{proof}

As mentioned in Section~\ref{sec:classic}, the typical order of $R_n(\F_k)$ is
$O(n^{-1/2})$. 
For a particular examples of classes of functionals with small Rademacher complexity we refer to \citet{Maurer06}.
In such cases, the upper bound in Theorem~\ref{thm:risk-bound}
converges to zero (meaning that our procedure is consistent) as both $n$ and
$n_i$ tend to infinity, in such a way that\footnote{
We conjecture that this constraint is an artifact from our proof.} $\log n/n_i = o(1)$.  The rate of
convergence w.r.t. $n$ can be improved up to $O(n^{-1})$ if placing additional
assumptions on $\M$ \cite{BBM05}. On the contrary, the rate w.r.t. $n_i$ cannot
be improved in general. Namely, the convergence rate $O(n^{-1/2})$ presented in the upper bound of Theorem~\ref{thm:LeSong} is
tight, as shown in the following novel result.
\begin{theorem}
\label{thm:lower-bound}
Under the assumptions of Theorem~\ref{thm:LeSong} denote
\[
\sigma^2_{\H_k} = \sup_{\|f\|_{\H_k}\leq 1} \mathbb{V}_{z\sim P}[f(z)].
\]
Then there exist universal constants $c,C$ such that for every integer $n\geq
1/\sigma^2_{\H_k}$, and with probability at least $c$
\[
\|\mu_k(P) - \mu_k(P_S)\|_{\H_k}
\geq
C\frac{\sigma_{\H_k}}{\sqrt{n}}.
\]  
\end{theorem}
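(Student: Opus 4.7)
The plan is to exploit the dual variational formula
\[
\|\mu_k(P) - \mu_k(P_S)\|_{\H_k} = \sup_{\|f\|_{\H_k}\leq 1}\left(\E_{z\sim P}[f(z)] - \frac{1}{n}\sum_{i=1}^n f(z_i)\right),
\]
which follows from the reproducing property, and then lower-bound the right-hand side by a single carefully chosen test function. Specifically, for any fixed $\eta\in(0,1)$, pick an $f^\star\in\H_k$ with $\|f^\star\|_{\H_k}\leq 1$ and $\mathbb{V}_{z\sim P}[f^\star] \geq (1-\eta)\,\sigma^2_{\H_k}$; such $f^\star$ exists by definition of the supremum. The RKHS norm bound together with the standing hypothesis $\|f\|_\infty\leq 1$ whenever $\|f\|_{\H_k}\leq 1$ ensures $|f^\star|\leq 1$ pointwise.

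Set $\xi_i := f^\star(z_i) - \E_P[f^\star]$ for $i=1,\ldots,n$. Then the $\xi_i$ are i.i.d., centered, bounded by $2$ in absolute value, and have variance $\sigma^2 := \mathbb{V}_P[f^\star] \geq (1-\eta)\sigma^2_{\H_k}$. Writing $S_n = \frac{1}{n}\sum_{i=1}^n \xi_i$, the reduction above yields
\[
\|\mu_k(P) - \mu_k(P_S)\|_{\H_k} \geq |S_n|.
\]
It now suffices to show that $|S_n| \geq C\,\sigma_{\H_k}/\sqrt{n}$ with a universal constant probability. The natural tool is the Paley--Zygmund inequality applied to $S_n^2$: one has $\E[S_n^2] = \sigma^2/n$, while a direct expansion using i.i.d. centering gives $\E[S_n^4] = \E[\xi_1^4]/n^3 + 3(n-1)\sigma^4/n^3 \leq 4\sigma^2/n^3 + 3\sigma^4/n^2$, where the bound $\E[\xi_1^4]\leq 4\sigma^2$ uses $|\xi_i|\leq 2$. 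The hypothesis $n\geq 1/\sigma^2_{\H_k}$ is exactly what is needed to absorb the $\sigma^2/n^3$ term into the $\sigma^4/n^2$ term, yielding $\E[S_n^4]\leq K\sigma^4/n^2$ for a universal $K$.

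Plugging these moments into Paley--Zygmund,
\[
\Pio\!\left(S_n^2 \geq \tfrac{1}{2}\,\E[S_n^2]\right) \geq \frac{(1/2)^2 (\E[S_n^2])^2}{\E[S_n^4]} \geq \frac{1}{4K},
\]
which gives $|S_n|\geq \sigma/\sqrt{2n}\geq \sqrt{(1-\eta)/2}\,\sigma_{\H_k}/\sqrt{n}$ with probability at least $c := 1/(4K)$, a universal constant; choosing $\eta=1/2$ fixes $C = 1/2$. The only subtle point in the argument is that $\sigma_{\H_k}^2$ may not be attained inside the unit ball of $\H_k$ and need not be small, so one must (i) choose $f^\star$ via the supremum definition rather than assume attainment, and (ii) thread the assumption $n\sigma^2_{\H_k}\geq 1$ through the fourth-moment calculation to convert what would otherwise be a sub-leading cubic-in-$n$ term into the same order as the Gaussian-like term. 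Everything else is a routine concentration computation, and an alternative route via the Berry--Esseen theorem (with third moment $\rho\leq 2\sigma^2$, forcing the Berry--Esseen error $O(1/(\sigma\sqrt{n}))$ to be $O(1)$ exactly when $n\geq 1/\sigma^2_{\H_k}$) gives the same conclusion.
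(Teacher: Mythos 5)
Your argument is correct, but it is not the route the paper takes. The paper's proof is essentially two lines: it combines the duality identity $\|\mu_k(P)-\mu_k(P_S)\|_{\H_k}=\sup_{\|f\|_{\H_k}\leq 1}\bigl(\E_{z\sim P}[f(z)]-\frac{1}{n}\sum_i f(z_i)\bigr)$ with Theorem 2.3 of Bartlett and Mendelson, cited as a black box, which lower-bounds the expected supremum of the empirical process over the whole unit ball by $c\,\sigma_F/\sqrt{n}$ and then asserts that for $n\geq 1/\sigma_F^2$ the supremum exceeds a constant fraction of its expectation with constant probability. You start from the same duality but then discard the supremum entirely: you pick a single near-extremal test function $f^\star$ with variance at least $(1-\eta)\sigma^2_{\H_k}$, reduce to the scalar average $S_n$ of bounded centered i.i.d.\ variables, and apply Paley--Zygmund to $S_n^2$ with an explicit fourth-moment bound. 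This is a genuinely more elementary and self-contained proof; it yields explicit universal constants ($C=1/2$, $c=1/(4K)$ with $K=11$ after absorbing $4\sigma^2/n^3$ into $\sigma^4/n^2$ via $n\sigma^2\geq 1/2$), and it makes transparent exactly where the hypothesis $n\geq 1/\sigma^2_{\H_k}$ is used, which the paper's citation obscures. What the paper's route buys in exchange is brevity and the stronger intermediate fact that the \emph{expected} supremum is already of order $\sigma_{\H_k}/\sqrt{n}$ without any condition on $n$. One small caveat on your closing remark: the Berry--Esseen alternative does not obviously ``give the same conclusion,'' since under $n\sigma^2\geq 1/2$ the Berry--Esseen error $O(1/(\sigma\sqrt{n}))$ is merely bounded by a constant that may well exceed $1$, so it yields no nontrivial probability bound without further work; the Paley--Zygmund argument is the one that actually closes the proof.
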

\begin{proof}See Section \ref{sect:ProofLowerBound}.\end{proof}

Finally, it is instructive to relate the notion of ``identifiability'' often
considered in the causal inference community \citep{Pearl00} to the properties
of the Mother distribution.  Saying that the model is \emph{identifiable} means
that the label $l$ of $P\in\P$ is assigned deterministically by $\M$.  In this
case, learning rates can become as fast as $O(n^{-1})$.  On the other hand, as
$\M(l|P)$ becomes nondeterministic, the problem becomes unidentifiable and
learning rates slow down (for example, in the extreme case of cause-effect pairs related
by linear functions polluted with additive Gaussian noise, $\M(l=+1|P) =
\M(l=-1|P)$ almost surely). The investigation of these phenomena is left
for future research.

\subsection{Low Dimensional Embeddings for Large Data}\label{sec:random}

For some kernel functions, the embeddings $\mu_k(P_S)\in\H_k$
are infinite dimensional.  Because of this reason, one must resort to the use
of dual optimization problems, and in particular, kernel matrices.  The
construction of these matrices requires at least $O(n^2)$ computational and
memory requirements, prohibitive for large $n$.  In this section, we
show that the infinite-dimensional embeddings $\mu_k(P_S)\in\H_k$ can be
approximated with easy to compute, low-dimensional representations
\citep{Rahimi07,Rahimi09}.  This will allow us to replace the
infinite-dimensional minimization problem~\eqref{eq:sample-based-erm} with a
low-dimensional one.

Assume that $\Z = \R^d$, and that the kernel function $k$ is
real-valued, and shift-invariant. Then, we can exploit Bochner's theorem
\citep{Rudin62} to show that, for any $z,z'\in \Z$:
\begin{align}
\label{eq:bochner-simplified}
k(z,z')
\!=
2C_k\!\E_{w,b}\!\left[\cos(\langle w, z\rangle\!+b)\cos(\langle w, z'\rangle\!+b)\right],
\end{align}
where $w \sim \frac{1}{C_k}p_k$, $b \sim \U[0,2\pi]$, $p_k\colon\Z\to\R$ is the
positive and integrable Fourier transform of $k$, and $C_k=\int_{\Z}p_k(w)dw$.
For example, the squared-exponential kernel \eqref{eq:gauss} is
shift-invariant, and its evaluations can be approximated by
\eqref{eq:bochner-simplified}, if setting $p_k (w) = \N(w|0,2\gamma I)$, and
$C_k=1$.  

We now show that for any probability measure $Q$ on $\Z$~and $z\in\Z$, the
function $k(z,\cdot)\in\H_k\subseteq L_2(Q)$ can be approximated by a linear
combination of randomly chosen elements from the Hilbert space $L_2(Q)$.
Namely, consider the functions parametrised by $w,z\in\Z$ and $b\in[0,2\pi]$:
\begin{equation}
\label{eq:cosine-feature}
g_{w,b}^z (\cdot) = 2C_k\cos(\langle w,z\rangle + b)\cos(\langle w,\cdot\rangle + b),
\end{equation}
which belong to $L_2(Q)$, since they are bounded.  If we sample
$\{(w_j,b_j)\}_{j=1}^m$ i.i.d., as discussed above, the average 
\[
\hat{g}_m^z(\cdot) = \frac{1}{m}\sum_{i=1}^m g_{w_i,b_i}^z(\cdot)
\]
can be viewed as an $L_2(Q)$-valued random variable.  Moreover,
\eqref{eq:bochner-simplified} shows that $\E_{w,b}[\hat{g}_m^z(\cdot)] =
k(z,\cdot)$.  This enables us to invoke concentration inequalities for Hilbert
spaces \cite{LT91}, to show the following result, which is in spirit to
\citet[Lemma 1]{Rahimi09}.
\begin{lemma}
\label{lemma:approx}
Let $\Z=\R^d$.  For any shift-invariant kernel~$k$, s.t.
$\sup_{z\in\Z}k(z,z)\leq 1$, any fixed $S=\{z_i\}_{i=1}^n\subset \Z$, any
probability distribution $Q$ on $\Z$, and any $\delta > 0$, we have
\[
\Biggl\|\mu_k(P_S) - \frac{1}{n}\sum_{i=1}^n\hat{g}_m^{z_i}(\cdot)\Biggr\|_{L_2(Q)}
\!\!\!\!\!\leq 
\frac{2C_k}{\sqrt{m}}\left(1 + \sqrt{{2\log(n/\delta)}}\right)
\]
with probability larger than $1-\delta$ over $\{(w_i,b_i)\}_{i=1}^m$.
\end{lemma}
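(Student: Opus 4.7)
The plan is to reduce the problem to bounding $\|\hat{g}_m^{z_i}(\cdot) - k(z_i,\cdot)\|_{L_2(Q)}$ for each of the $n$ fixed points $z_i$ individually, and then to combine these bounds through a union bound and a triangle inequality. Indeed, since $\mu_k(P_S) = \frac{1}{n}\sum_{i=1}^n k(z_i,\cdot)$, we have
\[
\Bigl\|\mu_k(P_S) - \tfrac{1}{n}\sum_{i=1}^n \hat{g}_m^{z_i}(\cdot)\Bigr\|_{L_2(Q)} \leq \frac{1}{n}\sum_{i=1}^n \bigl\|k(z_i,\cdot) - \hat{g}_m^{z_i}(\cdot)\bigr\|_{L_2(Q)},
\]
so it suffices to bound each of the $n$ summands by $\frac{2C_k}{\sqrt{m}}(1+\sqrt{2\log(n/\delta)})$ simultaneously with probability $1-\delta$.

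For a fixed $z_i$, by \eqref{eq:bochner-simplified} the $L_2(Q)$-valued random elements $\{g_{w_j,b_j}^{z_i}\}_{j=1}^m$ are i.i.d.\ with mean $k(z_i,\cdot)$, so $\hat{g}_m^{z_i} - k(z_i,\cdot)$ is a normalized sum of $m$ zero-mean Hilbert-valued random variables. Two simple boundedness facts drive the analysis: (i) $|g_{w,b}^{z_i}(\cdot)|\leq 2C_k$ pointwise, hence $\|g_{w,b}^{z_i}\|_{L_2(Q)}\leq 2C_k$ for any probability measure $Q$; and (ii) consequently $\E\|g_{w,b}^{z_i}-k(z_i,\cdot)\|_{L_2(Q)}^2 \leq \E\|g_{w,b}^{z_i}\|_{L_2(Q)}^2 \leq 4C_k^2$ by the Hilbert-space variance identity. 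From (ii), Jensen's inequality together with independence immediately yields
\[
\E_{\{w_j,b_j\}}\bigl\|\hat{g}_m^{z_i} - k(z_i,\cdot)\bigr\|_{L_2(Q)} \leq \sqrt{\tfrac{1}{m}\E\|g_{w,b}^{z_i}-k(z_i,\cdot)\|_{L_2(Q)}^2} \leq \frac{2C_k}{\sqrt{m}}.
\]

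To deviate from this expectation, I would apply McDiarmid's bounded differences inequality to the function $F(w_1,b_1,\dots,w_m,b_m) = \|\hat{g}_m^{z_i}-k(z_i,\cdot)\|_{L_2(Q)}$. Replacing any single $(w_j,b_j)$ changes $\hat{g}_m^{z_i}$ by at most $\frac{1}{m}\cdot 2\cdot 2C_k = \frac{4C_k}{m}$ in $L_2(Q)$-norm (by the triangle inequality applied to the difference of two copies of $g_{\cdot,\cdot}^{z_i}$), so $F$ has bounded differences with constant $4C_k/m$. McDiarmid then gives, with probability at least $1-\delta/n$,
\[
\|\hat{g}_m^{z_i}-k(z_i,\cdot)\|_{L_2(Q)} \leq \frac{2C_k}{\sqrt{m}} + 2C_k\sqrt{\tfrac{2\log(n/\delta)}{m}} = \frac{2C_k}{\sqrt{m}}\bigl(1+\sqrt{2\log(n/\delta)}\bigr).
\]
A union bound over $i=1,\dots,n$ and the triangle inequality displayed above complete the argument.

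The main thing to verify carefully is the bounded-differences constant and the corresponding constants in McDiarmid, since this is what delivers the clean factor $2C_k/\sqrt{m}\,(1+\sqrt{2\log(n/\delta)})$ rather than an estimate with extra numerical factors. Everything else is routine: $\|g_{w,b}^{z_i}\|_{L_2(Q)}\leq 2C_k$ comes directly from the pointwise bound, and the variance-type control of the expectation uses only orthogonality of centered independent Hilbert-valued summands. An alternative to McDiarmid would be the Hilbert-space Hoeffding inequality of \citet{LT91} referenced in the text, which yields the same shape of bound; I would mention this as a stylistic choice.
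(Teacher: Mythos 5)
Your proposal is correct and follows essentially the same route as the paper: the same triangle-inequality reduction to the $n$ individual terms $\|k(z_i,\cdot)-\hat{g}_m^{z_i}\|_{L_2(Q)}$, the same bound $\|g_{w,b}^{z_i}\|_{L_2(Q)}\leq 2C_k$, and the same union bound with $\delta'=\delta/n$. The only difference is that where the paper invokes the Hilbert-space Hoeffding inequality of \citet{Rahimi09} as a black box, you re-derive it via Jensen plus orthogonality for the expectation and McDiarmid for the deviation --- which is exactly how that cited lemma is proved, and your constants check out.
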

\begin{proof}See Section \ref{proof:ApproxLemma}.\end{proof}

Once sampled, the parameters $\{(w_i,b_i)\}_{i=1}^m$ allow us to approximate
the empirical kernel mean embeddings $\{\mu_k(P_{S_i})\}_{i=1}^n$ using
elements from $\text{span}({\{\cos(\langle w_i, \cdot\rangle +
b_i)\}_{i=1}^m})$, which is a finite-dimensional subspace of $L_2(Q)$.  
Therefore, we propose to
use $\{(\mu_{k,m}(P_{S_i}),l_i)\}_{i=1}^n$ as the training sample for our final
empirical risk minimization problem, where
\begin{equation}\label{eq:meank3}
  \mu_{k,m}(P_S) = \frac{2C_k}{|S|} \sum_{z\in S} \bigl( \cos(\langle w_j,
  z\rangle + b_j)\bigr)_{j=1}^m \!\!\in\!
  \R^m.
\end{equation}

These feature vectors can be computed in $O(m)$ time and stored in $O(1)$
memory; importantly, they can be used off-the-shelf in conjunction with any
learning algorithm.

For the precise excess risk bounds that take into account the use of these
low-dimensional approximations, please refer to Theorem~\ref{thm:new-theorem} from
Section~\ref{sec:new-theorem}.

\section{Extensions to Multivariate Causal Inference}\label{sec:dags}

It is possible to extend our framework to infer causal relatonships
between $d \geq 2$ variables $X=(X_1,\ldots,X_d)$. To this end, and as
introduced in Section~\ref{sec:intro}, assume the existence of a causal
directed acyclic graph $G$ which underlies the dependencies present in the
probability distribution $P(X)$. Therefore, our task is to recover $G$
from $S \sim P^n$.

Na\"ively, one could extend the framework presented in Section~\ref{sec:theory}
from the binary classification of $2$-dimensional distributions to the
multiclass classification of $d$-dimensional distributions. However, the number
of possible DAGs (and therefore, the number of labels in our multiclass
classification problem) grows super-exponentially in $d$.

An alternative approach is to consider the probabilities of the three labels
``$X_i \to X_j$'', ``$X_i \leftarrow X_j$``, and ``$X_i \indep X_j$`` for each
pair of variables $\{X_i,X_j\}\subseteq X$, when embedded along with every
possible \emph{context} $X_k \subseteq X \setminus \{X_i,X_j\}$. The intuition
here is the same as in the PC algorithm of \citet{Spirtes00}: in order to
decide the (absence of a) causal relationship between $X_i$ and $X_j$, one must
analyze the confounding effects of every $X_k \subseteq X \setminus
\{X_i,X_j\}$. 

\section{Numerical Simulations}\label{sec:exps}

We conduct an array of experiments to test the effectiveness of a simple implementation 
of the presented causal learning framework.  Given the use of random embeddings 
\eqref{eq:meank3} in our classifier, we term our
method the \emph{Randomized Causation Coefficient} (RCC).  Throughout our
simulations, we featurize each sample $S =
\{(x_{i},y_{i})\}_{i=1}^{n}$ as
\begin{align}\label{eq:ourfeatz}
  \nu(S) = (\mu_{k,m}(P_{S_x}), \mu_{k,m}(P_{S_y}), \mu_{k,m}(P_{S_{xy}})),
\end{align}
where the three elements forming \eqref{eq:ourfeatz} stand for the
low-dimensional representations \eqref{eq:meank3} of the empirical kernel mean
embeddings of $\{x_i\}_{i=1}^n$, $\{y_i\}_{i=1}^n$, and
$\{(x_i,y_i)\}_{i=1}^n$, respectively. The representation \eqref{eq:ourfeatz}
is motivated by the typical conjecture in causal inference about the existence
of asymmetries between the marginal and conditional distributions of
causally-related pairs of random variables \citep{Scholkopf12:Causal}. Each of
these three embeddings has random features sampled to approximate the sum of
three Gaussian kernels \eqref{eq:gauss} with hyper-parameters $0.1\gamma$,
$\gamma$, and $10\gamma$, where $\gamma$ is found using the median heuristic.
In practice, we set $m = 1000$, and observe no significant improvements when
using larger amounts of random features. To classify the embeddings
\eqref{eq:ourfeatz} in each of the experiments, we use the random
forest\footnote{Although random forests do not comply with Lipschitzness
assumptions from Section~\ref{sec:theory}, they showed the best empirical results.
Compliant alternatives such as SVMs exhibited a typical drop in classification
accuracy of $5\%$.} implementation from Python's \texttt{sklearn-0.16-git}. The
number of trees is chosen from 
$\{100,250,500,1000,5000\}$ via cross-validation.

Our experiments can be replicated using the source code 
at 

\resizebox{\linewidth}{!}{{\url{https://github.com/lopezpaz/causation_learning_theory}}.}

\subsection{Classification of T\"ubingen Cause-Effect Pairs}\label{sec:tuebingen}

The \emph{T\"ubingen cause-effect pairs} is a collection of heterogeneous,
hand-collected, real-world cause-effect samples \cite{Tuebingen14}.  Given the
small size of this dataset, we resort to the synthesis of an artificial Mother distribution
to sample our training data from.  To this end, assume that
sampling a synthetic cause-effect sample set $\hat{S}_i := \{(\hat{x}_{ij},
\hat{y}_{ij})\}_{j=1}^n \sim {\P}_\theta$ equals the following simple
generative process:
\begin{enumerate}
  \item A \emph{cause} vector $(\hat{x}_{ij})_{j=1}^{n}$ is sampled from a
  mixture of Gaussians with $c$ components. The mixture weights are
  sampled from $\U(0,1)$, and normalized to sum to one. The mixture means and
  standard deviations are sampled from $\N(0,\sigma_1)$, and $\N(0,\sigma_2)$,
  respectively, accepting only positive standard deviations.
  The cause vector is standardized to zero mean and unit variance.
  \item A \emph{noise} vector $(\hat{\epsilon}_{ij})_{j=1}^{n}$ is sampled from a
  centered Gaussian, with variance sampled from $\U(0,\sigma_3)$.
  \item A \emph{mapping mechanism} $\hat{f}_i$ is conceived as a spline fitted
  using an uniform grid of $d_f$ elements from
  $\min((\hat{x}_{ij})_{j=1}^n)$ to $\max((\hat{x}_{ij})_{j=1}^n)$ as inputs, and 
  $d_f$ normally distributed outputs.
  \item An \emph{effect} vector is built as $(\hat{y}_{ij} :=
  \hat{f}_i(\hat{x}_{ij})+\hat{\epsilon}_{ij})_{j=1}^n$, and
  standardized to zero mean and unit variance.
  \item Return the cause-effect sample $\hat{S}_i := \{(\hat{x}_{ij},
  \hat{y}_{ij})\}_{j=1}^n$.
\end{enumerate}
To choose a $\theta = (c,\sigma_1, \sigma_2, \sigma_3, d_f)$
that best resembles the unlabeled test data, we minimize the distance between the
embeddings of $N$ synthetic pairs and the Tuebingen samples 
\begin{align}\label{eq:Motherobj}
  \argmin_\theta \sum_{i}^{} \min_{1 \leq j \leq N} \| \nu(S_i)-\nu(\hat{S}_j) \|^2_2,
\end{align}
over $c,d_f \in \{1, \ldots, 10\}$, and $\sigma_1, \sigma_2$, and $\sigma_3 \in \{0,
0.5, 1, \ldots, 5\}$, where the $\hat{S}_j \sim \P_\theta$, the $S_i$ are the
T\"ubingen cause-effect pairs, and $\nu$ is as in \eqref{eq:ourfeatz}.
This strategy can be thought of as transductive learning, since we assume to
know the test inputs prior to the training of our inference rule. 
We set $n = 1000$, and $N = 10,000$.

Using the generative process outlined above, we construct the synthetic
training data 
\begin{align*}
  \{&\{\nu(\{(\hat{x}_{ij}, \hat{y}_{ij})\}_{j=1}^{n}), +1)\}_{i=1}^{N},\\
  &\{\nu(\{(\hat{y}_{ij}, \hat{x}_{ij})\}_{j=1}^{n}), -1)\}_{i=1}^{N}\},
\end{align*}
where $\{(\hat{x}_{ij}, \hat{y}_{ij})\}_{j=1}^{n} \sim \P_\theta$, and train
our classifier on it.

\begin{figure}[t]
  \begin{center}
    \includegraphics[width=\linewidth]{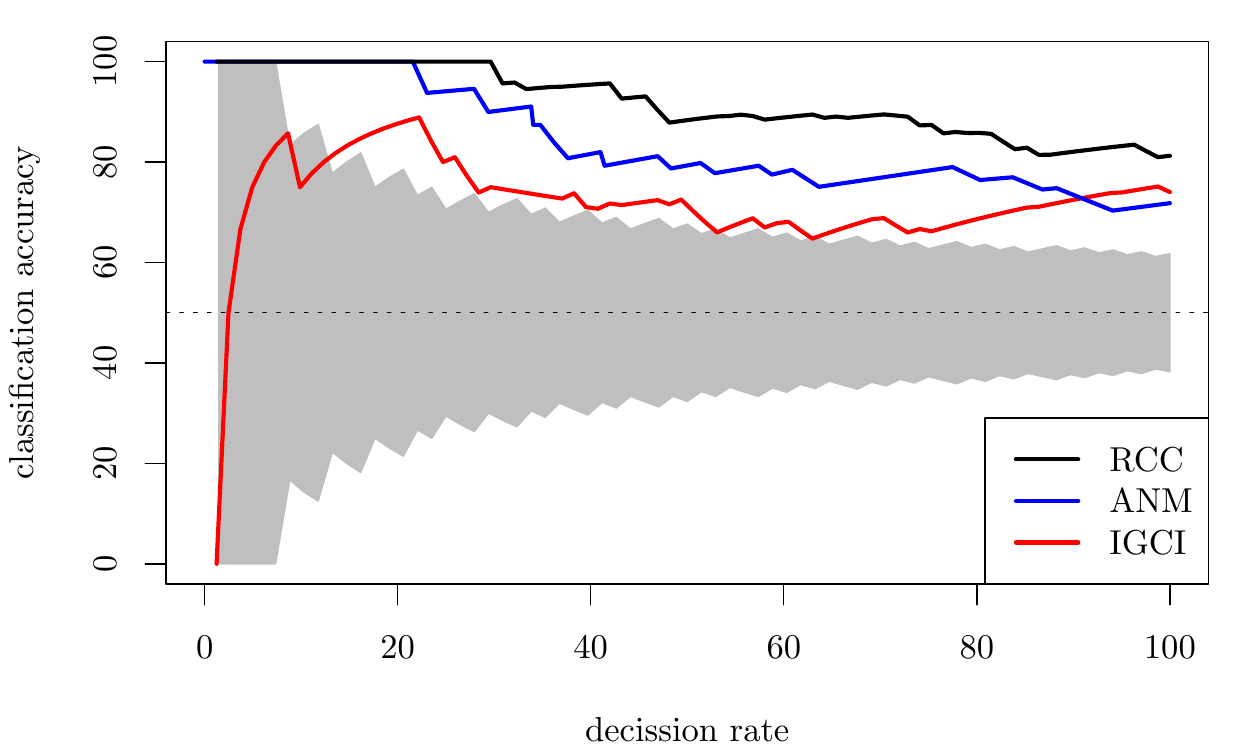} 
  \end{center}
  \caption{Accuracy of RCC, IGCI and ANM on the T\"ubingen cause-effect pairs,
  as a function of decision rate. The grey area depicts accuracies not
  statistically significant.}
  \label{fig:tuebingen}
\end{figure}

Figure \ref{fig:tuebingen} plots the classification accuracy of RCC, IGCI
\citep{Daniusis12}, and ANM \citep{Mooij14} versus the fraction of decissions 
that the algorithms are forced to make out of the 82 scalar T\"uebingen
cause-effect pairs. To compare these results to other lower-performing methods,
refer to \citet{Janzing12}. RCC surpasses the state-of-the-art 
with a classification accuracy of $81.61\%$ when inferring the
causal directions on all pairs. The confidence of RCC is computed using the
classifier's output class probabilities. SVMs obtain a test accuracy of
$77.2\%$ in this same task.

\subsection{Inferring the Arrow of Time}

We test the effectiveness of our method to infer the arrow of time from causal
time series. More specifically, we assume access to a set of time series $
\{x_{ij}\}_{j=1}^{n_i}$, and our task is to infer, for each series, whether $X_i
\to X_{i+1}$ or $X_i \leftarrow X_{i+1}$.

We compare our framework to the state-of-the-art of \citet{Peters09}, using the
same electroencephalography signals \citep{EEG} as in their original
experiment.  On the one hand, \citet{Peters09} construct two Auto-Regressive
Moving-Average (ARMA) models for each causal time series and time direction,
and prefers the solution under which the model residuals are independent from
the inferred cause. To this end, the method uses two parameters for which no
estimation procedure is provided.  On the other hand, our approach makes no
assumptions whatsoever about the parametric model underlying the series, at the
expense of requiring a disjoint set of $N=10,000$ causal time series for
training. Our method matches the best performance of \citet{Peters09}, with an
accuracy of $82.66\%$.
 
\subsection{ChaLearn's Challenge Data}

The cause-effect challenges organized by \citet{Codalab14} provided $N =
16,199$ training causal samples $S_i$, each drawn from the distribution of $X_i
\times Y_i$, and labeled either ``$X_i \to Y_i$'', ``$X_i \leftarrow Y_i$'',
``$X_i \leftarrow Z_i \to Y_i$'', or ``$X_i \indep Y_i$''. The task of the
competition was to develop a \emph{causation coefficient} which would predict
large positive values to causal samples following ``$X_i \to Y_i$'', large
negative values to samples following ``$X_i \leftarrow Y_i$'', and zero
otherwise.  Using these data, our obtained a test
\textit{bidirectional area under the curve score} \citep{Codalab14} of $0.74$
in one minute and a half, ranking third in the overall leaderboard.  The winner of the competition obtained a score of
$0.82$ in thirty minutes, but resorted to several dozens of hand-crafted
features.

Partitioning these same data in different ways, we learned two related but
different binary classification tasks. First, we trained our classifier to
\emph{detect latent confounding}, and obtained a test classification accuracy
of $80\%$ on the task of distinguishing ``$X \to Y$ or $X \leftarrow X$'' from
``$X \leftarrow Z \to Y$''.  Second, we trained our classifier to \emph{measure
dependence}, and obtained a test classification accuracy of $88\%$ on the task
of distinguishing between ``$X \indep Y$'' and ``else''. We consider these
results to be a promising direction to learn flexible hypothesis tests and
dependence measures \emph{directly from data}.

\subsection{Reconstruction of Causal DAGs} \label{sec:dags-experiment}

We apply the strategy described in Section~\ref{sec:dags} to reconstruct the
causal DAGs of two multivariate datasets: \emph{autoMPG} and \emph{abalone}
\citep{UCI}. Once again, we resort to synthetic training data, generated in a
similar procedure to the one used in Section~\ref{sec:tuebingen}. Refer to
Section~\ref{sec:dagtrain} for details.

Regarding \emph{autoMPG}, in
Figure~\ref{fig:auto}, 1) the release date of the vehicle (AGE)
causes the miles per gallon consumption (MPG), acceleration capabilities (ACC)
and horse-power (HP), 2) the weight of the vehicle (WEI) causes the horse-power
and MPG, and that 3) other characteristics such as the engine displacement
(DIS) and number of cylinders (CYL) cause the MPG.  For \emph{abalone}, in
Figure~\ref{fig:abalone}, 1) the age of the snail causes all
the other variables, 2) the overall weight of the snail (WEI) is caused by the
partial weights of its meat (WEA), viscera (WEB), and shell (WEC), and 3) the
height of the snail (HEI) is responsible for other phisicaly attributes such as
its diameter (DIA) and length (LEN).

The target variable for 
each dataset is shaded in gray. Interstingly, our inference
reveals that the \emph{autoMPG} dataset is a \emph{causal} prediction task (the
features \emph{cause} the target), and that the \emph{abalone} dataset is an
\emph{anticausal} prediction task (the target \emph{causes} the features). This
distinction has implications when learning from these data \citep{Scholkopf12}.

\begin{figure}[h!]
\begin{center}
\begin{tikzpicture}[node distance=1cm, auto,]
 \node[punkt,fill=gray!30!white] (MPG) {MPG};
 \node[punkt, below=of MPG] (AGE) {AGE};
 \node[punkt, left=of AGE] (ACC) {ACC};
 \node[punkt, left=of ACC] (WEI) {WEI};
 \node[punkt, above=of WEI] (HP)  {HP};
 \node[punkt, right=of AGE] (CYL) {CYL};
 \node[punkt, right=of MPG] (DIS) {DIS};
 \draw[pil] (WEI) -- (MPG);
 \draw[pil] (CYL) -- (MPG);
 \draw[pil] (DIS) -- (MPG);
 \draw[pil] (HP) -- (MPG);
 \draw[pil] (AGE) -- (MPG);
 \draw[pil] (AGE) -- (ACC);
 \draw[pil] (AGE) -- (HP);
 \draw[pil] (WEI) -- (HP);
\end{tikzpicture}
\end{center}
\vspace{-0.2 cm}
\caption{Causal DAG recovered from data \emph{autoMPG}.}
\label{fig:auto}
\end{figure}
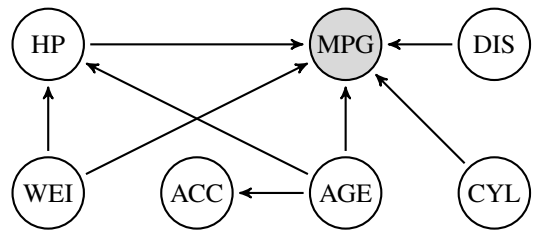
\vspace{-0.1cm}
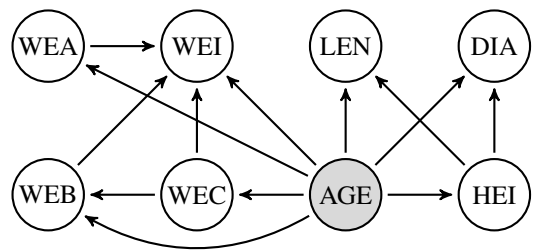
\begin{figure}[h!]
\begin{center}
\begin{tikzpicture}[node distance=1cm, auto,]
 \node[punkt,fill=gray!30!white] (AGE) {AGE};
 \node[punkt, left=of AGE] (WEC) {WEC};
 \node[punkt, left=of WEC] (WEB) {WEB};
 \node[punkt, above=of WEB] (WEA) {WEA};
 \node[punkt, above=of AGE] (LEN) {LEN};
 \node[punkt, right=of LEN] (DIA) {DIA};
 \node[punkt, right=of AGE] (HEI) {HEI};
 \node[punkt, right=of WEA] (WEI) {WEI};
 \draw[pil] (AGE) -- (LEN);
 \draw[pil] (AGE) -- (HEI);
 \draw[pil] (AGE) -- (WEI);
 \draw[pil] (AGE) -- (WEA);
 \draw[pil] (AGE) -- (WEC);
 \draw[pil] (WEC) -- (WEB);
 \draw[pil] (WEC) -- (WEI);
 \draw[pil] (WEA) -- (WEI);
 \draw[pil] (WEB) -- (WEI);
 \draw[pil] (HEI) -- (LEN);
 \draw[pil] (HEI) -- (DIA);
 \draw[pil] (AGE) -- (DIA);
 \draw[pil] (AGE) to[bend left] (WEB);
\end{tikzpicture}
\end{center}
\vspace{-0.2 cm}
\caption{Causal DAG recovered from data \emph{abalone}.}
\label{fig:abalone}
\end{figure}
\section{Future Work}\label{sec:discussion}

Three research directions are in progress. First, to improve learning rates by
using common assumptions from causal inference.  Second, to further investigate
methods to reconstruct multivariate DAGs. Third, to develop mechanisms to
interpret the causal footprints learned by our classifiers. 

\newpage
\clearpage
\bibliography{jmlr_rcc}
\bibliographystyle{icml2015}

\newpage
\clearpage
\onecolumn
\appendix

\section{Topological and Measurability Considerations}\label{sec:measurability}
Let $(\Z, \tau_\Z)$ and $(\L, \tau_\L)$ be two separable topological spaces,
where $\Z$ is the \emph{input space} and $\L := \{-1,1\}$ is the \emph{output
space}.  Let $\B(\tau)$ be the Borel \hbox{$\sigma$-algebra} induced by the
topology $\tau$.  Let $\Pio$ be an unknown probability measure on $(\Z \times
\L,\B(\tau_\Z)\otimes\B(\tau_\L))$. 

Consider also the classifiers $f \in \F_k$ and loss function $\ell$ to be measurable.

\subsection{Measurability Conditions to Learn from Distributions}\label{sec:measurability_distrib}

The first step towards the deployment of our learning setup is to guarantee the
existence of a measure on the space $\muP \times \L$, where $\muP = \{\mu_k(P)
: P \in \P \} \subseteq \H_k$ is the set of kernel mean embeddings associated
with the measures in $\P$.  The following lemma provides such guarantee.  This
allows the analysis within the rest of this Section on ${\muP \times \L}$.

\begin{lemma}\label{lemma:measurability}
  Let $(\Z, \tau_\Z)$ and $(\L, \tau_\L)$ be two separable topological spaces.
  Let $\P$ be the set of all Borel probability measures on $(\Z,\B(\tau_\Z))$.
  Let $\muP = \{ \mu_k(P) : P \in \P \} \subseteq \H_k$, where $\mu_k$ is the
  kernel mean embedding \eqref{eq:meank} associated to some bounded continuous
  kernel function $k : \Z \times \Z \to \R$.  Then, there exists a measure on
  $\muP \times \L$.
  \begin{proof}
    The following is a similar result to \citet[Proof 3]{Szabo14b}.
    
    Start by endowing $\P$ with the weak topology $\tau_\P$, such that the map
    \begin{equation}\label{eq:lmap}
      L(P) = \int_\Z f(z)\d P(z),
    \end{equation}
    is continuous for all $f \in C_b(\Z)$. This makes $(\P,\B(\tau_\P))$ a
    measurable space.
    
    First, we show that $\mu_k : (\P,\B(\tau_\P)) \to (\H_k,\B(\tau_\H))$ is Borel
    measurable. Note that $\H_k$ is separable due to the separability of $(\Z,
    \tau_\Z)$ and the continuity of $k$ \citep[Lemma 4.33]{Steinwart08}. The
    separability of $\H_k$ implies $\mu_k$ is Borel measurable iff it is weakly
    measurable \citep[Thm.  IV.22]{Reed72}.  Note that the boundedness and the
    continuity of $k$ imply $\H_k \subseteq C_b(\Z)$ \citep[Lemma
    4.28]{Steinwart08}. Therefore, \eqref{eq:lmap} remains continuous for all $f\in
    \H_k$, which implies the Borel measurability of $\mu_k$.
    
    Second, $\mu_k : (\P, \B(\tau_\P)) \to (\G, \B(\tau_\G))$ is Borel measurable,
    since the $\B(\tau_\G) = \{ A \cap \G : A \in \B(\H_k)\} \subseteq
    \B(\tau_\H)$, where $\B(\tau_\G)$ is the $\sigma$-algebra induced by the
    topology of $\G \in \B(\H_k)$ \citep{Szabo14b}.
    
    Third, we show that $g : (\P \times \L, \B(\tau_\P) \otimes \B(\tau_\L)) \to
    (\G \times \L, \B(\tau_\G) \otimes \B(\tau_\L))$ is measurable. For that, it
    suffices to decompose $g(x,y) = (g_1(x,y),g_2(x,y))$ and show that $g_1$ and
    $g_2$ are measurable \citep{Szabo14b}.
  \end{proof}
\end{lemma}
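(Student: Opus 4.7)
The plan is to construct the desired measure on $\muP \times \L$ as the pushforward of the Mother distribution $\M$ (which lives on $\P \times \L$) under the map $g(P,l) := (\mu_k(P), l)$. This requires (i) equipping $\P$ with a topology and Borel $\sigma$-algebra that supports $\M$, and (ii) verifying that $\mu_k$ is a Borel measurable function into $\H_k$, from which $g$ becomes measurable and $g_*\M$ gives the desired measure.

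For step (i), I would endow $\P$ with its \emph{weak topology} $\tau_\P$, i.e., the coarsest topology on $\P$ under which every functional $L_f(P) := \int_\Z f\,dP$ is continuous for $f \in C_b(\Z)$. This is the canonical topology for Borel probability measures, and $\B(\tau_\P)$ is where $\M$ naturally lives.

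For step (ii), since $(\Z,\tau_\Z)$ is separable and $k$ is continuous, the RKHS $\H_k$ is separable (a standard RKHS fact). By Pettis' measurability theorem, Borel measurability of a function into a separable Banach space is equivalent to weak measurability, so it suffices to check that $P \mapsto \langle \mu_k(P), f\rangle_{\H_k}$ is measurable for every $f \in \H_k$. The reproducing property gives $\langle \mu_k(P), f \rangle_{\H_k} = \int f\,dP$, and since $k$ is bounded and continuous one has $\H_k \subseteq C_b(\Z)$, so this equals $L_f(P)$, which is continuous by construction of $\tau_\P$. Hence $\mu_k$ is Borel measurable.

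Finally, equip $\muP$ with its trace Borel $\sigma$-algebra $\{A \cap \muP : A \in \B(\H_k)\}$; then $\mu_k$ remains measurable as a map into $\muP$, the product map $g$ is measurable with respect to the product $\sigma$-algebras on $\P \times \L$ and $\muP \times \L$ (each coordinate being measurable), and $\M_k := g_*\M$ supplies the desired measure. The main technical subtlety is the Banach-space-valued measurability of $\mu_k$; the reduction to weak measurability via Pettis' theorem is what forces us to invoke separability of $\H_k$, which in turn is precisely why we need separability of $(\Z,\tau_\Z)$ and continuity of $k$.
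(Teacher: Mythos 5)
Your proposal is correct and follows essentially the same route as the paper's proof: weak topology on $\P$, reduction of Borel measurability of $\mu_k$ to weak measurability via separability of $\H_k$ (Pettis' theorem, which the paper cites as Reed--Simon Thm.~IV.22), the inclusion $\H_k \subseteq C_b(\Z)$ to identify $\langle \mu_k(P), f\rangle_{\H_k}$ with the weakly continuous functional $\int f\,\d P$, the trace $\sigma$-algebra on $\mu_k(\P)$, and coordinate-wise measurability of the product map. Your version is slightly more explicit in naming the resulting measure as the pushforward $g_*\M$ and in invoking the reproducing property, but the argument is the same.
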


\section{Proofs}\label{sec:proofs}

\subsection{Theorem \ref{thm:LeSong}}
\label{proof:LeSong}
Note that the original statement of Theorem 27 in \citet{S06} assumed
$f\in[0,1]$ while we let elements of the ball in RKHS to take negative values
as well which can be achieved by minor changes of the proof.  For completeness
we provide the modified proof here.  Using the well known dual relation between
the norm in RKHS and sup-norm of empirical process which can be found in
Theorem 28 of \citet{S06} we can write:
\begin{equation}
\label{eq:RKHS-duality}
\|\mu_k(P)-\mu_k(P_S)\|_{\H_k}
=
\sup_{\|f\|_{\H_k}\leq 1}\left(
\E_{z\sim P}[f(z)]
-
\frac{1}{n}\sum_{i=1}^n f(z_i)
\right).
\end{equation}
Now we proceed in the usual way.  First we note that the sup-norm of empirical
process appearing on the r.h.s. can be viewed as a real-valued function of
i.i.d.\:random variables $z_1,\dots,z_n$.  We will denote it as
$F(z_1,\dots,z_n)$.  The straightforward computations show that the function
$F$ satisfies the \emph{bounded difference} condition (Theorem 14 of
\citet{S06}). Indeed, let us fix all the values $z_1,\dots,z_n$ except for the
$z_j$ which we will set to $z_j'$.  Using identity $|a-b| =
(a-b)\mathbbm{1}_{a>b} + (b-a)\mathbbm{1}_{a\leq b}$ and noting that if $\sup_x
f(x) = f(x^*)$ then $\sup_x f(x) - \sup_x g(x)$ is upper bounded by $f(x^*) -
g(x^*)$ we get
\begin{align*}
&|F(z_1,\dots,z_j',\dots,z_n) - F(z_1,\dots,z_j,\dots,z_n)|
\\
&\leq
\frac{1}{n}\bigl(f(z_j) - f(z_j')\bigr)\mathbbm{1}_{F(z_1,\dots,z_j',\dots,z_n) > F(z_1,\dots,z_j,\dots,z_n)}
+
\frac{1}{n}\bigl(f(z_j') - f(z_j)\bigr)\mathbbm{1}_{F(z_1,\dots,z_j',\dots,z_n) \leq F(z_1,\dots,z_j,\dots,z_n)}.
\end{align*}
Now noting that $|f(z) - f(z')|\in[0,2]$ we conclude with
\begin{align*}
&
|F(z_1,\dots,z_j',\dots,z_n) - F(z_1,\dots,z_j,\dots,z_n)|\\
&\leq 
\frac{2}{n}\mathbbm{1}_{F(z_1,\dots,z_j',\dots,z_n) > F(z_1,\dots,z_j,\dots,z_n)}
+
\frac{2}{n}\mathbbm{1}_{F(z_1,\dots,z_j',\dots,z_n) \leq F(z_1,\dots,z_j,\dots,z_n)} = \frac{2}{n}.
\end{align*}
Using McDiarmid's inequality (Theorem 14 of \cite{S06}) with $c_i=2/n$ we
obtain that with probability not less than $1-\delta$ the following holds:
\[
\sup_{\|f\|_{\H_k}\leq 1}\left(
\E_{z\sim P}[f(z)]
-
\frac{1}{n}\sum_{i=1}^n f(z_i)
\right)
\leq
\E\left[\sup_{\|f\|_{\H_k}\leq 1}\left(
\E_{z\sim P}[f(z)]
-
\frac{1}{n}\sum_{i=1}^n f(z_i)
\right)\right]
+
\sqrt{\frac{2\log(1/\delta)}{n}}.
\]
Finally, we proceed with the symmetrization step (Theorem 2.1 of \cite{K11})
which upper bounds the expected value of the sup-norm of empirical process with
twice the Rademacher complexity of the class $\{f\in\H_k\colon \|f\|_{\H_k}\leq
1\}$ and with upper bound on this Rademacher complexity which can be found in
Lemma 22 and related remarks of \citet{BM01}.

We also note that the original statement of Theorem 27 in \citet{S06} contains
extra factor of 2 under logarithm compared to our modified result.  This is
explained by the fact that while we upper bounded the Rademacher complexity
directly, \citet{S06} instead upper bounds it in terms of the
empirical (or \emph{conditional}) Rademacher complexity which results in
another application of McDiarmid's inequality together with union bound.

\subsection{Theorem \ref{thm:risk-bound}}\label{sect:ProofRiskBound}
We will proceed as follows:
\begin{align}
\notag
\Rp(\fnt) - \Rp(\f)
&=
\Rp(\fnt) - \Rpnt(\fnt)\\
\notag
&+
\Rpnt(\fnt) -\Rpnt(\f)\\
\notag
&+
\Rpnt(\f) - \Rp(\f)\\
\notag
&\leq
2\sup_{f\in \F_k}|\Rp(f) - \Rpnt(f)|\\
\notag
&=
2\sup_{f\in \F_k}|\Rp(f) - \Rpn(f) + \Rpn(f) - \Rpnt(f)|\\
&\leq
\label{eq:excess-bound1}
2\sup_{f\in \F_k}|\Rp(f) - \Rpn(f)|
+
2\sup_{f\in \F_k}|\Rpn(f) - \Rpnt(f)|.
\end{align}
We will now upper bound two terms in \eqref{eq:excess-bound1} separately.

We start with noticing that Theorem \ref{thm:classic} can be used in order to
upper bound the first term.  All we need is to match the quantities appearing
in our problem to the classical setting of learning theory, discussed in
Section \ref{sec:classic}.  Indeed, let $\mu(\P)$ play the role of input space
$\Z$.  Thus the input objects are kernel mean embeddings of elements of $\P$.
According to Lemma~\ref{lemma:measurability}, there is a distribution defined
over $\mu(\P)\times\L$, which will play the role of unknown distribution
$\Pio$.  Finally, i.i.d. pairs
$\bigl\{\bigl(\mu_k(P_i),l_i\bigr)\bigr\}_{i=1}^n$ form the training sample.
Thus, using Theorem \ref{thm:classic} we get that with probability not less
than $1-\delta/2$ (w.r.t. the random training sample
$\bigl\{\bigl(\mu_k(P_i),l_i\bigr)\bigr\}_{i=1}^n$) the following holds true:
\begin{equation}
\label{eq:excess-bound2}
\sup_{f\in\F_k}|\Rp(f) - \Rpn(f)|\\
\leq
2 L_\varphi \E\left[\sup_{f\in \F_k}\frac{1}{n}\left|\sum_{i=1}^n \sigma_i f(z_i)\right|\right] + B\sqrt{\frac{\log(2/\delta)}{2n}}.
\end{equation}
To deal with the second term in \eqref{eq:excess-bound1} we note that
\begin{align*}
\notag
\sup_{f\in \F_k}|\Rpn(f) - \Rpnt(f)| 
&=
\sup_{f\in \F_k}\left|
\frac{1}{n}
\sum_{i=1}^n\Bigl[\varphi\bigl(-l_if\bigl(\mu_k(P_i)\bigr)\bigr) - \varphi\bigl(-l_i f\bigl(\mu_k(P_{S_i})\bigr)\bigr)\Bigr]
\right| \\
\notag
&\leq
\sup_{f\in \F_k}
\frac{1}{n}
\sum_{i=1}^n
\left|
\varphi\bigl(-l_if\bigl(\mu_k(P_i)\bigr)\bigr) - \varphi\bigl(-l_i f\bigl(\mu_k(P_{S_i})\bigr)\bigr)
\right| \\
\notag
&\leq
L_\varphi\sup_{f\in \F_k}
\frac{1}{n}
\sum_{i=1}^n
\left|
f\bigl(\mu_k(P_i)\bigr) - f\bigl(\mu_k(P_{S_i})\bigr)
\right|,
\end{align*}
where we have used the Lipschitzness of the cost function $\varphi$.
Using the Lipschitzness of the functionals $f\in\F_k$ we obtain:
\begin{align}
\label{eq:sec3-proof-1}
\sup_{f\in \F_k}|\Rpn(f) - \Rpnt(f)| \leq
L_\varphi\sup_{f\in \F_k}
\frac{L_f}{n}
\sum_{i=1}^n
\|
\mu_k(P_i) - \mu_k(P_{S_i})
\|_{\H_k}.
\end{align}
Also note that the usual reasoning shows that if $h\in\H_k$ and
$\|h\|_{\H_k}\leq 1$ then: 
\[
|h(z)|=|\langle h, k(z,\cdot) \rangle_{\H_k}| \leq \|h\|_{\H_k} \|k(z,\cdot)\|_{\H_k} = \|h\|_{\H_k} \sqrt{k(z,z)}\leq \sqrt{k(z,z)}
\]
and hence $\|h\|_{\infty} = \sup_{z\in\Z}|h(z)| \leq 1$ because our kernel is
bounded.  This allows us to use Theorem \ref{thm:LeSong} to control every term
in \eqref{eq:sec3-proof-1} and combine the resulting upper bounds in a union
bound\footnote{
Note that the union bound results in the extra $\log n$ factor in our bound. 
We believe that this factor can be avoided using a refined proof technique, based on the application of McDiarmid's inequality.
This question is left for a future work.
} over $i=1,\dots,n$ to show that for any fixed $P_1,\dots,P_n$ with
probability not less than $1-\delta/2$ (w.r.t. the random samples
$\{S_i\}_{i=1}^n$) the following is true:
\begin{equation}
\label{eq:excess-bound3}
L_\varphi\sup_{f\in \F}
\frac{L_f}{n}
\sum_{i=1}^n
\|
\mu_k(P_i) - \mu_k(P_{S_i})
\|_{\H_k}
\leq
L_\varphi\sup_{f\in \F}
\frac{L_f}{n}
\sum_{i=1}^n
\left(
2\sqrt{\frac{\E_{z\sim P}[k(z,z)]}{n_i}} + \sqrt{\frac{2\log\frac{2n}{\delta}}{n_i}}
\right).
\end{equation}
The quantity $2n/\delta$ appears under the logarithm since for every $i$ we
have used Theorem \ref{thm:LeSong} with $\delta' = \delta / (2n)$.  Combining
\eqref{eq:excess-bound2} and \eqref{eq:excess-bound3} in a union bound together
with \eqref{eq:excess-bound1} we finally get that with probability not less
than $1-\delta$ the following is true:
\[
\Rp(\fnt) - \Rp(\f)
\leq
4 L_\varphi R_n(\F) + 2B\sqrt{\frac{\log(2/\delta)}{2n}}
+
\frac{4L_\varphi L_{\F}}{n}
\sum_{i=1}^n
\left(
\sqrt{\frac{\E_{z\sim P}[k(z,z)]}{n_i}} + \sqrt{\frac{\log\frac{2n}{\delta}}{2n_i}}
\right),
\]
where we have defined $L_{\F} = \sup_{f\in\F}L_f$.

\subsection{Theorem \ref{thm:lower-bound}}
\label{sect:ProofLowerBound}
Our proof is a simple combination of the duality equation
\eqref{eq:RKHS-duality} combined with the following lower bound on the supremum
of empirical process presented in Theorem 2.3 of \citet{BM06}:
\begin{theorem}
\label{thm:BM06}
Let $F$ be a class of real-valued functions defined on a set $\Z$ such that
$\sup_{f\in F}\|f\|_{\infty}\leq 1$.  Let $z_1,\dots,z_n,z\in\Z$ be
i.i.d.\:according to some probability measure $P$ on $\Z$.  Set $\sigma^2_F =
\sup_{f\in F} \mathbb{V}[f(z)].$ Then there are universal constants $c,c',$ and
$C$ for which the following holds:
\[
\E\left[\sup_{f\in F}\left|
\E[f(z)]
-
\frac{1}{n}\sum_{i=1}^n f(z_i)
\right|\right]
\geq
c\frac{\sigma_F}{\sqrt{n}}.
\]
Furthermore, for every integer $n\geq 1/\sigma^2_F$, with probability at least $c'$,
\[
\sup_{f\in F}\left|
\E[f(z)]
-
\frac{1}{n}\sum_{i=1}^n f(z_i)
\right|
\geq
C\E\left[\sup_{f\in F}\left|
\E[f(z)]
-
\frac{1}{n}\sum_{i=1}^n f(z_i)
\right|\right].
\]
\end{theorem}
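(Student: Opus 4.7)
The plan is to reduce the claimed MMD lower bound to a Paley--Zygmund-style lower tail estimate for a single scalar empirical mean. The starting point is the duality identity from the proof of Theorem~\ref{thm:LeSong}: since the RKHS unit ball $\mathcal{F}_1 := \{f \in \H_k : \|f\|_{\H_k} \leq 1\}$ is symmetric under $f \mapsto -f$, one has
\[
\|\mu_k(P) - \mu_k(P_S)\|_{\H_k}
= \sup_{f \in \mathcal{F}_1} \left|\, \E_{z \sim P}[f(z)] - \tfrac{1}{n}\sum_{i=1}^n f(z_i) \,\right|.
\]
The task thus becomes lower-bounding the supremum of a uniformly bounded empirical process (the hypothesis of Theorem~\ref{thm:LeSong} gives $\|f\|_\infty \leq 1$ on $\mathcal{F}_1$) whose worst-case variance parameter is precisely the $\sigma^2_{\H_k}$ appearing in the statement.

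I would first discard the supremum and fix a single near-maximizer of the variance: pick $f^\star \in \mathcal{F}_1$ with $\mathbb{V}[f^\star(z)] \geq \tfrac{1}{2} \sigma^2_{\H_k}$, which is possible by definition of the supremum, and observe that the supremum above is bounded below by $|S_n|$, where $S_n := \tfrac{1}{n}\sum_{i=1}^n f^\star(z_i) - \E_{z\sim P}[f^\star(z)]$ is a centered i.i.d.\ average of random variables bounded by $2$ with variance $\sigma^2 := \mathbb{V}[f^\star(z)]$. The theorem thus reduces to the scalar claim that $|S_n| \geq C\sigma/\sqrt{n}$ with probability bounded away from zero.

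The natural tool is the Paley--Zygmund inequality applied to the nonnegative random variable $S_n^2$, giving $\Pr\bigl[S_n^2 \geq \tfrac{1}{4} \E S_n^2\bigr] \geq \tfrac{9}{16}\,(\E S_n^2)^2 / \E[S_n^4]$. A direct fourth-moment expansion, using $|f^\star(z_i) - \E f^\star| \leq 2$ to bound $\E[(f^\star - \E f^\star)^4] \leq 4\sigma^2$, yields $\E[S_n^4] \leq 4\sigma^2/n^3 + 3\sigma^4/n^2$, so the ratio $(\E S_n^2)^2 / \E[S_n^4]$ is bounded below by a universal constant as soon as $n\sigma^2 \geq 1$. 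This is precisely the content of the hypothesis $n \geq 1/\sigma^2_{\H_k}$ (up to a factor of $2$ absorbed into the universal constants $c, C$). The resulting lower deviation $|S_n| \gtrsim \sigma/\sqrt{n}$ then transfers directly through the duality identity to the desired MMD lower bound.

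The main obstacle is not conceptual but bookkeeping: the boundedness constant $\|f^\star\|_\infty \leq 1$ must be tracked carefully through the fourth-moment calculation so that the resulting $c$ and $C$ are genuinely universal, independent of $P$, $k$, $n$, and $\sigma_{\H_k}$. This is exactly what Theorem~2.3 of \citet{BM06} packages for general $L_\infty$-bounded function classes, and the reduction above shows that the present statement follows by invoking that result with $F = \mathcal{F}_1$ and $\sigma_F = \sigma_{\H_k}$, where the Paley--Zygmund argument sketched above is the substance of the BM06 lower tail bound.
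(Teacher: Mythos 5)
Be aware first that the paper itself offers no proof of this statement: it is quoted verbatim as Theorem~2.3 of \citet{BM06} and used as a black box in Section~\ref{sect:ProofLowerBound}, with only a remark that the universality of the constants can be checked by inspecting the proof in \citet{BM06}. So your proposal must stand on its own, and two things go wrong at the level of framing before any mathematics: you have misidentified the target --- the statement concerns an arbitrary class $F$ with $\sup_{f\in F}\|f\|_\infty\le 1$, not the RKHS unit ball and the MMD of Theorem~\ref{thm:lower-bound} --- and your closing step, deriving the statement ``by invoking Theorem~2.3 of \citet{BM06}'', is circular, because the statement under review \emph{is} that theorem. That said, your core computation is correct as far as it goes: picking $f^\star$ with $\mathbb{V}[f^\star(z)]\ge \sigma_F^2/2$ and applying Paley--Zygmund to $S_n^2$ with your fourth-moment bound shows that, whenever $n\ge 1/\sigma_F^2$, with probability at least a universal constant,
\[
\sup_{f\in F}\left|\E[f(z)]-\frac{1}{n}\sum_{i=1}^n f(z_i)\right| \;\ge\; |S_n| \;\ge\; C\,\frac{\sigma_F}{\sqrt{n}}.
\]
This yields the first (expectation) inequality in the regime $n\ge 1/\sigma_F^2$, and it would in fact suffice to prove Theorem~\ref{thm:lower-bound} directly, bypassing \citet{BM06} altogether; presented that way, it would be a legitimate and more elementary route to the result the paper actually needs.

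But it does not prove the statement under review: the gap is the second, self-normalized claim, which asserts that with probability at least $c'$ the supremum exceeds $C\,\E\bigl[\sup_{f\in F}|\cdots|\bigr]$, i.e., a constant fraction of \emph{its own expectation}. Your argument bounds the supremum below by $C\sigma_F/\sqrt{n}$, but for rich classes the expectation of the supremum can be far larger than $\sigma_F/\sqrt{n}$, so the two conclusions are not interchangeable. Concretely, let $\Z=\{1,\dots,N\}$, let $P$ be uniform, and let $F=\{\mathbbm{1}_A-P(A): A\subseteq\Z\}$; then $\sigma_F\le 1/2$, while for $N\gg n$ the supremum is at least $1-n/N$ (take $A$ equal to the sample), so $\E\bigl[\sup_{f\in F}|\cdots|\bigr]$ is of order $1$, not $1/\sqrt{n}$. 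Hence ``$\sup\ge C\sigma_F/\sqrt{n}$ with constant probability'' does not imply the second claim with any universal $C$. To obtain it, one must run the second-moment argument on the supremum $W=\sup_{f\in F}|\cdots|$ itself, which requires a variance bound of the form $\mathbb{V}[W]\le K\bigl(\sigma_F^2+\E[W]\bigr)/n$, i.e., Talagrand/Bousquet-type concentration for empirical processes (this is also where the hypothesis $n\ge 1/\sigma_F^2$ is used, to make this variance comparable to $(\E[W])^2$). That concentration step, not the scalar Paley--Zygmund computation, is the substance of the proof in \citet{BM06}; your sketch replaces it with an assertion. A minor further point: no single-function reduction can give the expectation bound for all $n$ --- for $F=\{f\}$ with $f$ valued in $\{-1,0,1\}$, $P(f=\pm 1)=\sigma^2/2$ and $n=1$, one has $\E|S_1|=\sigma^2\ll\sigma$ --- so the restriction $n\ge 1/\sigma_F^2$ is genuinely needed in that part as well.
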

We note that constants $c,c',$ and $C$ appearing in the last result do not
depend on $n,\sigma^2_F$ or any other quantities appearing in the statement.
This can be verified by the inspection of the proof presented in \cite{BM06}.

\subsection{Lemma \ref{lemma:approx}}
\label{proof:ApproxLemma}
\begin{proof}
Bochner's theorem \citep{Rudin62} states that for any shift-invariant symmetric
p.d. kernel $k$ defined on $\Z\times \Z$ where $\Z=R^d$ and any $z,z'\in \Z$
the following holds:
\begin{equation}
\label{eq:bochner-first}
k(z,z') = \int_{\Z}p_k(w) e^{i\langle w, z-z'\rangle} dw,
\end{equation}
where $p_k$ is a positive and integrable Fourier transform of the kernel $k$.
It is immediate to check that Fourier transform of such kernels $k$ is always
an even function, meaning $p_k(-w)=p_k(w)$.  Indeed, since $k(z-z')=k(z'-z)$
for all $z,z'\in\Z$ (due to symmetry of the kernel) we have:
\[
p_k(w) := \int_{\Z}k(\delta) e^{i\langle w, \delta\rangle}d\delta
=
\int_{\Z}k(\delta) \cos(\langle w, \delta\rangle)d\delta
=
\int_{\Z}k(\delta) \cos(-\langle w, \delta\rangle)d\delta
=
p_k(-w)
\]
which holds for any $w\in \mathbb{R}^d$.
Thus for any $z,z'\in \mathbb{R}^d$ we can write:
\begin{align}
\notag
k(z,z') &= \int_{\mathbb{R}^d}p_k(w) \bigl(\cos(\langle w, z-z'\rangle)+i\cdot \sin(\langle w, z-z'\rangle)\bigr) dw\\
\notag
&=\int_{\mathbb{R}^d}p_k(w) \bigl(\cos(\langle w, z-z'\rangle) dw +i\cdot \int_{\mathbb{R}^d}p_k(w)\sin(\langle w, z-z'\rangle)\bigr) dw\\
\notag
&=\int_{\mathbb{R}^d}p_k(w) \cos(\langle w, z-z'\rangle) dw\\
\notag
&=2\int_{\mathbb{R}^d}\int_{0}^{2\pi}\frac{1}{2\pi}p_k(w) \cos(\langle w, z\rangle+b)\cos(\langle w, z'\rangle+b)\, db\, dw.
\end{align}
Denote $C_k=\int_{\mathbb{R}^d}p(w)dw<\infty$.  Next we will use identity
$\cos(a-b)=\frac{1}{\pi}\int_{0}^{2\pi}\cos(a+x)\cos(b+x)dx$ and introduce
random variables $b$ and $w$ distributed according to $\mathcal{U}[0,2\pi]$ and
$\frac{1}{C_k}p_k(w)$ respectively.  Then we can rewrite
\begin{align}
\label{eq:bochner-simple}
k(z,z')=2C_k\E_{b,w}\left[ \cos(\langle w, z\rangle+b)\cos(\langle w, z'\rangle+b)\right].
\end{align}

Now let $Q$ be any probability distribution defined on $\Z$. 
Then for any $z,w\in\Z$ and $b\in[0,2\pi]$ the function 
\[
g_{w,b}^z(\cdot) := 2 C_k \cos(\langle w, z\rangle+b)\cos(\langle w, \cdot\rangle+b)
\]
belongs to the $L_2(Q)$ space. 
Namely, $L_2(Q)$ norm of such a function is finite.
Moreover, it is bounded by $2C_k$:
\begin{align}
\notag
\|g_{w,b}^z(\cdot)\|^2_{L_2(Q)}&=\int_{\Z}\Bigl(2C_k\cos(\langle w, z\rangle+b)\cos(\langle w, t\rangle+b)\Bigr)^2 dQ(t)\\
\label{eq:fbounded}
&\leq
4C_k^2\int_{\Z}dQ(t) = 4C_k^2.
\end{align}
Note that for any fixed $x\in\Z$ and any random parameters $w\in\Z$ and
$b\in[0,2\pi]$ the element $g_{w,b}^z(\cdot)$ is a \emph{random variable}
taking values in the $L_2(Q)$ space (which is Hilbert).  Such Banach-space
valued random variables are well studied objects \cite{LT91} and a number of
concentration results for them are known by now.  We will use the following
version of Hoeffding inequality which can be found in Lemma 4 of
\citet{Rahimi09}:
\begin{lemma}
\label{lemma:rahimi_hoeffding}
Let $v_1,\dots,v_m$ be i.i.d.\:random variables taking values in a ball of
radius $M$ centred around origin in a Hilbert space $H$. 
Then, for any $\delta > 0$, the following holds: 
\[
\left\|\frac{1}{m}\sum_{i=1}^m v_i - \E\left[\frac{1}{m}\sum_{i=1}^m v_i \right]\right\|_{H} \leq \frac{M}{m}\left(1+\sqrt{2\log(1/\delta)}\right).
\]
with probability higher than $1-\delta$ over the random sample $v_1,\dots,v_m$.
\end{lemma}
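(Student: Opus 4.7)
The plan is to combine McDiarmid's bounded differences inequality, applied to the real-valued random variable
\[
F(v_1,\ldots,v_m) := \left\|\frac{1}{m}\sum_{i=1}^m v_i - \E\left[\frac{1}{m}\sum_{i=1}^m v_i\right]\right\|_H,
\]
with a second-moment bound on $\E[F]$. This is the standard route for Banach-space-valued Hoeffding inequalities: McDiarmid handles the fluctuations of $F$ around its mean, and a short variance computation controls the mean itself.

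For the bounded differences check, I would fix all arguments except $v_j$ and replace it by some $v_j'$. The empirical average then shifts by $(v_j' - v_j)/m$, so by the reverse triangle inequality in $H$ the value of $F$ changes by at most $\|v_j - v_j'\|_H/m \leq 2M/m$, since both $v_j$ and $v_j'$ lie in the ball of radius $M$. Hence one may take bounded-differences constants $c_i = 2M/m$, giving $\sum_{i=1}^m c_i^2 = 4M^2/m$. For the expectation, Jensen gives $\E[F] \leq \sqrt{\E[F^2]}$; using that the centered summands $v_i - \E v_i$ are independent and mean-zero in $H$, cross terms vanish in $\E\|\sum_i (v_i - \E v_i)\|_H^2$, leaving
\[
\E[F^2] = \frac{1}{m^2}\sum_{i=1}^m \E\|v_i - \E v_i\|_H^2 \leq \frac{1}{m^2}\sum_{i=1}^m \E\|v_i\|_H^2 \leq \frac{M^2}{m},
\]
so $\E[F] \leq M/\sqrt{m}$.

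Finally, McDiarmid's inequality yields $\Pr(F \geq \E[F] + t) \leq \exp(-m t^2/(2M^2))$; setting the right-hand side equal to $\delta$ gives $t = M\sqrt{2\log(1/\delta)/m}$, and combining with $\E[F] \leq M/\sqrt{m}$ produces the claimed high-probability bound of the form $F \leq (M/\sqrt{m})(1+\sqrt{2\log(1/\delta)})$. The only mildly delicate step is the bounded-differences verification in the Hilbert setting, but it reduces cleanly to the reverse triangle inequality together with the diameter bound on the ball containing the $v_i$, and requires nothing beyond that. No sharper tool (for instance, a Pinelis-type moment inequality) is needed, since McDiarmid already provides sub-Gaussian concentration with the correct variance proxy.
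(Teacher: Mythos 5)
Your proof is correct, and it is essentially the argument the paper relies on: the paper does not prove this lemma itself but cites Lemma 4 of Rahimi and Recht, whose proof is precisely your two-step combination of McDiarmid's bounded-differences inequality (with constants $c_i = 2M/m$, justified by the reverse triangle inequality) and the second-moment bound $\E[F]\leq\sqrt{\E[F^2]}\leq M/\sqrt{m}$ obtained from independence and the vanishing of cross terms in the Hilbert norm. One discrepancy should be flagged, though it is the paper's fault rather than yours: your argument establishes the bound $\frac{M}{\sqrt{m}}\bigl(1+\sqrt{2\log(1/\delta)}\bigr)$, whereas the lemma as printed claims $\frac{M}{m}\bigl(1+\sqrt{2\log(1/\delta)}\bigr)$. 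The printed $1/m$ rate is a typo and is in fact false in general: taking $v_i=\epsilon_i M e$ for i.i.d.\ Rademacher signs $\epsilon_i$ and a fixed unit vector $e$, the left-hand side is of order $M/\sqrt{m}$ with constant probability, which exceeds any $O(M/m)$ bound once $m$ is large. The $M/\sqrt{m}$ version you proved is the one actually used downstream in the paper, e.g.\ in the $\frac{2C_k}{\sqrt{m}}$ bound of Lemma~\ref{lemma:approx}, so your proof establishes the statement that is actually needed.
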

Note that Bochner's formula \eqref{eq:bochner-first} and particularly its
simplified form \eqref{eq:bochner-simple} indicates that if $w$ is distributed
according to normalized Fourier transform $\frac{1}{C_k}p_k$ and
$b\sim\mathcal{U}([0,2\pi])$ then $\E_{w,b}[g_{w,b}^z(\cdot)] = k(z,\cdot)$.
Moreover, we can show that any element $h$ of RKHS $\H_k$ also belongs to the
$L_2(Q)$ space:
\begin{align}
\notag
\|h(\cdot)\|^2_{L_2(Q)}
&=\int_{\Z}\bigl(h(t)\bigr)^2 dQ(t)\\
\notag
&=\int_{\Z}\langle k(t,\cdot),h(\cdot)\rangle_{\H_k}^2 dQ(t)\\
\label{eq:hk-in-l2}
&\leq\int_{\Z}k(t,t)\|h\|_{\H_k}^2 dQ(t) \leq \|h\|_{\H_k}^2<\infty,
\end{align}
where we have used the reproducing property of $k$ in RKHS $\H_k$,
Cauchy-Schwartz inequality, and the fact that the kernel $k$ is bounded.  Thus
we conclude that the function $k(z,\cdot)$ is also an element of $L_2(Q)$
space.

This shows that if we have a sample of i.i.d.\:pairs $\{(w_i,b_i)\}_{i=1}^m$
then $\E\left[\frac{1}{m}\sum_{i=1}^m g^z_{w_i,b_i}(\cdot)\right]= k(z,\cdot)$
where $\{g^z_{w_i,b_i}(\cdot)\}_{i=1}^m$ are i.i.d.\:elements of Hilbert space
$L_2(Q)$.  We conclude the proof using concentration inequality for Hilbert
spaces of Lemma \ref{lemma:rahimi_hoeffding} and a union bound over the
elements $z\in S$, since
\begin{align*}
\left\|\mu_k(P_S) - \frac{1}{n}\sum_{i=1}^n\hat{g}_m^{z_i}(\cdot)\right\|_{L_2(Q)}
&=
\left\|\frac{1}{n}\sum_{i=1}^nk(z_i,\cdot) - \frac{1}{n}\sum_{i=1}^n\hat{g}_m^{z_i}(\cdot)\right\|_{L_2(Q)}\\
&\leq
\frac{1}{n}\sum_{i=1}^n\left\|k(z_i,\cdot) - \hat{g}_m^{z_i}(\cdot)\right\|_{L_2(Q)}\\
&=
\frac{1}{n}\sum_{i=1}^n\left\|k(z_i,\cdot) - \frac{1}{m}\sum_{i=j}^m g^{z_i}_{w_j,b_j}(\cdot)\right\|_{L_2(Q)},
\end{align*}
where we have used the triangle inequality.
\end{proof}

\subsection{Excess Risk Bound for Low-Dimensional Representations}\label{sec:new-theorem}
Let us first recall some important notations introduced in Section \ref{sec:random}.
For any $w,z\in\Z$ and $b\in[0,2\pi]$ we define the following functions
\begin{equation}
\label{eq:new-proof-cos}
g_{w,b}^z (\cdot) = 2C_k\cos(\langle w,z\rangle + b)\cos(\langle w,\cdot\rangle + b) \in L_2(Q),
\end{equation}
where $C_k = \int_{\Z}p_k(z) dz$ for $p_k\colon \Z\to\R$ being the Fourier transform of $k$.
We sample $m$ pairs $\{(w_i,b_i)\}_{i=1}^m$ i.i.d. from $\left(\frac{1}{C_k}p_k\right)\times \U[0,2\pi]$ and define the average function
\[
\hat{g}_m^z(\cdot) = \frac{1}{m}\sum_{i=1}^m g_{w_i,b_i}^z(\cdot) \in L_2(Q).
\]

Since cosine functions \eqref{eq:new-proof-cos} do not necessarily belong to the RKHS $\H_k$ 
and we are going to use their linear combinations as a training points,
our classifiers should now act on the whole $L_2(Q)$ space.
To this end, we redefine the set of classifiers introduced in the Section \ref{sec:distrib} to 
be $\{\sig f\colon f\in\F_Q\}$ where now $\F_Q$ is the set of functionals mapping $L_2(Q)$ to $\R$.

Recall that our goal is to find $f^*$ such that
\begin{equation}
\label{eq:new-proof-1}
f^* \in \arg\min_{f\in \F_Q} R_{\varphi}(f):= \arg\min_{f\in \F_Q} \E_{(P,l)\sim \M}\left[\varphi\Bigl(-f\bigl(\mu_k(P)\bigr)l\Bigr)\right].
\end{equation}
As was pointed out in Section \ref{proof:ApproxLemma} if the kernel $k$ is bounded $\sup_{z\in\Z}k(z,z)\leq 1$ then $\H_k\subseteq L_2(Q)$.
In particular, for any $P\in\P$ it holds that $\mu_k(P)\in L_2(Q)$ and thus \eqref{eq:new-proof-1} is well defined.

Instead of solving \eqref{eq:new-proof-1} directly, we will again use the version of empirical risk minimization (ERM).
However, this time we won't use empirical mean embeddings $\{\mu_k(P_{S_i})\}_{i=1}^n$ 
since, as was already discussed, those lead to the expensive computations involving the kernel matrix.
Instead, we will pose the ERM problem in terms of the low-dimensional approximations based on cosines.
Namely, we propose to use the following estimator $\fnt^m$:
\begin{equation*}
\fnt^m \in \arg\min_{f\in \F_Q} \Rpnt^m(f):= \arg\min_{f\in \F_Q} \frac{1}{n}\sum_{i=1}^n\varphi\left(-f\left(\frac{1}{n_i}\sum_{z\in S_i}\hat{g}_m^{z}(\cdot)\right)l_i\right).
\end{equation*}
The following result puts together Theorem \ref{thm:risk-bound} and Lemma \ref{lemma:approx} to provide an excess risk bound for $\fnt^m$ which accounts for all sources of the errors introduced in the learning pipeline:
\begin{theorem}
\label{thm:new-theorem}
Let $\Z=\R^d$
and $Q$ be any probability distribution on $\Z$.
Consider the RKHS $\H_k$ associated with some bounded, continuous,
shift-invariant kernel function $k$, such that $\sup_{z \in \Z} k(z,z)\leq 1$.
Consider a class $\F_Q$ of functionals mapping $L_2(Q)$ to~$\R$ with Lipschitz
constants uniformly bounded by $L_{Q}$.  Let $\varphi\colon \R\to\R^+$ be a
$L_{\varphi}$-Lipschitz function such that $\phi(z) \geq \mathbbm{1}_{z>0}$.
Let $\varphi\bigl(-f(h) l\bigr) \leq B$ for every $f\in\F_Q$, $h \in L_2(Q)$, and
$l\in\L$.  
Then for any $\delta > 0$ the following holds:
\begin{align*}
\Rp(\fnt^m) - \Rp(\f)
&\leq
4 L_\varphi R_n(\F_Q) + 2B\sqrt{\frac{\log(3/\delta)}{2n}}\\
&+
\frac{4L_\varphi L_{Q}}{n}
\sum_{i=1}^n
\left(
\sqrt{\frac{\E_{z\sim P_i}[k(z,z)]}{n_i}} + \sqrt{\frac{\log\frac{3n}{\delta}}{2n_i}}
\right)\\
&+
2\frac{L_{\varphi}L_Q}{n}\sum_{i=1}^n
\frac{2C_k}{\sqrt{m}}\left(1 + \sqrt{{2\log(3n\cdot n_i/\delta)}}\right)
\end{align*}
with probability not less than $1-\delta$ over all sources of
randomness, which are $\{(P_i,l_i)\}_{i=1}^n$, $\{S_i\}_{i=1}^n$, $\{(w_i,b_i)\}_{i=1}^m$.
\end{theorem}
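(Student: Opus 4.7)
The plan is to follow the same decomposition strategy that yielded Theorem~\ref{thm:risk-bound}, but with one additional layer to absorb the cosine-feature approximation error, and a three-way split of the confidence budget to accommodate the three independent sources of randomness.

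First I would start from the standard two-term decomposition of the excess risk used in~\eqref{eq:excess-bound1}, namely
\[
\Rp(\fntm) - \Rp(\f) \leq 2\sup_{f\in\F_Q}|\Rp(f) - \tilde{R}^m_\varphi(f)|,
\]
which follows because $\fntm$ is the minimizer of $\tilde{R}^m_\varphi$ over $\F_Q$. I would then insert two intermediate quantities by the triangle inequality:
\[
|\Rp(f) - \tilde{R}^m_\varphi(f)|
\leq
|\Rp(f) - \Rpn(f)| + |\Rpn(f) - \Rpnt(f)| + |\Rpnt(f) - \tilde{R}^m_\varphi(f)|.
\]
The first two terms are exactly those already controlled in the proof of Theorem~\ref{thm:risk-bound}; the third one encodes the new source of error coming from replacing $\mu_k(P_{S_i})$ by the cosine-feature average $\frac{1}{n_i}\sum_{z\in S_i}\hat{g}_m^z(\cdot)$.

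For the first term, I would invoke Theorem~\ref{thm:classic} applied to the induced distribution $\M_k$ over $\mu_k(\P)\times\L$ (extended to $L_2(Q)\times\L$, since $\H_k\subseteq L_2(Q)$), which with probability at least $1-\delta/3$ yields the $R_n(\F_Q)$ plus $B\sqrt{\log(3/\delta)/(2n)}$ contribution. For the second term, I would repeat the argument used in~\eqref{eq:sec3-proof-1}: Lipschitzness of $\varphi$ and of $f\in\F_Q$ (with constant $L_Q$ in the $L_2(Q)$ norm, which dominates the $\H_k$ norm when $k$ is bounded) reduces it to controlling $\|\mu_k(P_i)-\mu_k(P_{S_i})\|$, and here I would apply Theorem~\ref{thm:LeSong} to each $i$ separately, taking a union bound over $i=1,\dots,n$ at confidence $\delta/(3n)$. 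This produces the second and third lines of the claimed bound, with $\log(3n/\delta)$ under the square root.

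For the new third term, Lipschitzness of $\varphi$ and $f$ give
\[
|\Rpnt(f)-\tilde R^m_\varphi(f)|
\leq \frac{L_\varphi L_Q}{n}\sum_{i=1}^n
\Bigl\|\mu_k(P_{S_i}) - \tfrac{1}{n_i}\textstyle\sum_{z\in S_i}\hat g_m^z(\cdot)\Bigr\|_{L_2(Q)},
\]
and each summand is exactly what Lemma~\ref{lemma:approx} bounds. Since Lemma~\ref{lemma:approx} is already a statement for a fixed $S_i$ with a union bound over its $n_i$ points, I would take an extra union bound over $i=1,\dots,n$ at confidence $\delta/(3n)$ each, producing the factor $\log(3n\cdot n_i/\delta)$ inside the square root. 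Finally, a last union bound combines the three $\delta/3$-events into a single $1-\delta$ event, and inserting the three bounds into the top-level excess-risk inequality (with the factor $2$) yields the statement.

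The only non-routine point I anticipate is bookkeeping the two independent randomness layers inside the third term: Lemma~\ref{lemma:approx} is conditional on fixed $S_i$, while $S_i$ itself is random. Because the Lemma's bound does not depend on $S_i$ except through the sample size $n_i$, its tail probability holds uniformly in the realization of $S_i$, so integrating out the outer randomness changes nothing and the confidence budget just splits as above. No assumption beyond those already used in Theorems~\ref{thm:LeSong}, \ref{thm:classic}, \ref{thm:risk-bound} and Lemma~\ref{lemma:approx} is needed; in particular, the shift-invariance hypothesis is used only to invoke Bochner's theorem in Lemma~\ref{lemma:approx}.
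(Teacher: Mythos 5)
Your proposal is correct and follows essentially the same route as the paper: the same three-way decomposition $\Rp(f)-\Rpn(f)$, $\Rpn(f)-\Rpnt(f)$, $\Rpnt(f)-\Rpnt^m(f)$ after the standard ERM reduction, the first two terms handled exactly as in Theorem~\ref{thm:risk-bound}, and the third via Lemma~\ref{lemma:approx} with a union bound over $i$, with the confidence budget split three ways to produce the $\log(3/\delta)$, $\log(3n/\delta)$, and $\log(3n\cdot n_i/\delta)$ factors. One small wording slip: it is the $\H_k$ norm that dominates the $L_2(Q)$ norm (i.e., $\|h\|_{L_2(Q)}\leq\|h\|_{\H_k}$ for bounded $k$), which is what lets an $L_2(Q)$-Lipschitz functional inherit the same Lipschitz constant with respect to the $\H_k$ metric --- the direction you actually use is right, so this does not affect the argument.
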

\begin{proof}
We will proceed similarly to \eqref{eq:excess-bound1}:
\begin{align}
\notag
\Rp(\fnt^m) - \Rp(\f)
&=
\Rp(\fnt^m) - \Rpnt^m(\fnt^m)\\
\notag
&+
\Rpnt^m(\fnt^m) -\Rpnt^m(\f)\\
\notag
&+
\Rpnt^m(\f) - \Rp(\f)\\
\notag
&\leq
2\sup_{f\in \F_Q}|\Rp(f) - \Rpnt^m(f)|\\
\notag
&=
2\sup_{f\in \F_Q}|\Rp(f) - \Rpn(f) + \Rpn(f) - \Rpnt(f) + \Rpnt(f) - \Rpnt^m(f)|\\
&\leq
\label{eq:new-proof-ex}
2\sup_{f\in \F_Q}|\Rp(f) - \Rpn(f)|
+
2\sup_{f\in \F_Q}|\Rpn(f) - \Rpnt(f)|
+
2\sup_{f\in \F_Q}|\Rpnt(f) - \Rpnt^m(f)|.
\end{align}
First two terms of \eqref{eq:new-proof-ex} were upper bounded in Section \ref{sect:ProofRiskBound}.
Note that the upper bound of the second term (proved in Theorem \ref{thm:risk-bound}) was based on the assumption that functionals in $F_Q$ are Lipschitz on $\H_k$ w.r.t.\:the $\H_k$ metric.
But as we already noted, for bounded kernels we have $\H_k\subseteq L_2(Q)$ which implies $\|h\|_{L_2(Q)}\leq \|h\|_{\H_k}$ for any $h\in\H_k$ (see \eqref{eq:hk-in-l2}). Thus $|f(h) - f(h')| \leq L_f \|h - h'\|_{L_2(Q)}\leq L_f \|h - h'\|_{\H_k}$ for any $h,h'\in \H_k$.
It means that the assumptions of Theorem \ref{thm:risk-bound} hold true and we can safely apply it to upper bound the first two terms of \eqref{eq:new-proof-ex}.

We are now going to upper bound the third one using Lemma \ref{lemma:approx}:
\begin{align*}
\sup_{f\in \F_Q}|\Rpnt(f) - \Rpnt^m(f)|&=
\sup_{f\in \F_Q}\left| \frac{1}{n}\sum_{i=1}^n \varphi\left(-f\bigl(\mu_k(P_{S_i})\bigr)l_i\right) - 
\frac{1}{n}\sum_{i=1}^n\varphi\left(-f\left(\frac{1}{n_i}\sum_{z\in S_i}\hat{g}_m^{z}(\cdot)\right)l_i\right)\right|\\
&\leq
\frac{1}{n}\sum_{i=1}^n\sup_{f\in \F_Q}\left| \varphi\left(-f\bigl(\mu_k(P_{S_i})\bigr)l_i\right) - 
\varphi\left(-f\left(\frac{1}{n_i}\sum_{z\in S_i}\hat{g}_m^{z}(\cdot)\right)l_i\right)\right|\\
&\leq
\frac{L_{\varphi}}{n}\sum_{i=1}^n\sup_{f\in \F_Q}\left| f\bigl(\mu_k(P_{S_i})\bigr) - 
f\left(\frac{1}{n_i}\sum_{z\in S_i}\hat{g}_m^{z}(\cdot)\right)\right|\\
&\leq
\frac{L_{\varphi}}{n}\sum_{i=1}^n\sup_{f\in \F_Q}L_f
\left\| \mu_k(P_{S_i}) - \frac{1}{n_i}\sum_{z\in S_i}\hat{g}_m^{z}(\cdot)\right\|_{L_2(Q)}.
\end{align*}
We can now use Lemma \ref{lemma:approx} combined in union bound over $i=1,\dots,n$ with $\delta' = \delta/n$.
This will give us that
\[
\sup_{f\in \F_Q}|\Rpnt(f) - \Rpnt^m(f)|
\leq
\frac{L_{\varphi}L_Q}{n}\sum_{i=1}^n
\frac{2C_k}{\sqrt{m}}\left(1 + \sqrt{{2\log(n\cdot n_i/\delta)}}\right).
\]
with probability not less than $1-\delta$ over $\{(w_i,b_i)\}_{i=1}^m$.
\end{proof}

\clearpage
\section{Training and Test Protocols for Section~\ref{sec:dags-experiment}}\label{sec:dagtrain}

The synthesis of the training data for the experiments described in
Section~\ref{sec:dags-experiment} follows a very similar procedure to the one from
Section~\ref{sec:tuebingen}. The main difference here is that, when trying to
infer the cause-effect relationship between two variables $X_i$ and $X_j$ belonging to a larger set of variables $X = (X_1, \ldots, X_d)$, we
will have to account for the effects of possible confounders $X_k \subseteq X \setminus \{X_i,X_j\}$.
For the sake of simplicity, we will only consider one-dimensional confounding
effects, that is, scalar $X_k$.

\subsection{Training Phase}

To generate cause-effect pairs exhibiting every possible scalar confounding
effect, we will generate data from the eight possible directed acyclic graphs
depicted in Figure~\ref{fig:8dags}.

\begin{figure}[h!]
  \begin{center}
  \begin{tikzpicture}[node distance=0.8cm, auto,]
   \node[punkt, minimum height=2em             ] (X1) {};
   \node[punkt, minimum height=2em, right=of X1] (Y1) {};
   \node[punkt, minimum height=2em, right=of Y1] (Z1) {};
   
   \node[punkt, minimum height=2em, below=of X1] (X2) {};
   \node[punkt, minimum height=2em, right=of X2] (Y2) {};
   \node[punkt, minimum height=2em, right=of Y2] (Z2) {};
   \draw[pil] (X2) -- (Y2);
   
   \node[punkt, minimum height=2em, below=of X2] (X3) {};
   \node[punkt, minimum height=2em, right=of X3] (Y3) {};
   \node[punkt, minimum height=2em, right=of Y3] (Z3) {};
   \draw[pil] (X3) -- (Y3);
   \draw[pil] (Y3) -- (Z3);
   
   \node[punkt, minimum height=2em, right=of Z1] (X4) {};
   \node[punkt, minimum height=2em, right=of X4] (Y4) {};
   \node[punkt, minimum height=2em, right=of Y4] (Z4) {};
   \draw[pil] (X4) -- (Y4);
   \draw[pil] (Z4) -- (Y4);
   
   \node[punkt, minimum height=2em, below=of X4] (X5) {};
   \node[punkt, minimum height=2em, right=of X5] (Y5) {};
   \node[punkt, minimum height=2em, right=of Y5] (Z5) {};
   \draw[pil] (Y5) -- (X5);
   \draw[pil] (Y5) -- (Z5);
   
   \node[punkt, minimum height=2em, below=of X5] (X6) {};
   \node[punkt, minimum height=2em, right=of X6] (Y6) {};
   \node[punkt, minimum height=2em, right=of Y6] (Z6) {};
   \draw[pil] (X6) -- (Y6);
   \draw[pil] (Y6) -- (Z6);
   \draw[pil] (X6) to[bend left] (Z6);
   
   \node[punkt, minimum height=2em, right=of Z4] (X7) {};
   \node[punkt, minimum height=2em, right=of X7] (Y7) {};
   \node[punkt, minimum height=2em, right=of Y7] (Z7) {};
   \draw[pil] (X7) -- (Y7);
   \draw[pil] (Z7) -- (Y7);
   \draw[pil] (X7) to[bend left] (Z7);
   
   \node[punkt, minimum height=2em, below=of X7] (X8) {};
   \node[punkt, minimum height=2em, right=of X8] (Y8) {};
   \node[punkt, minimum height=2em, right=of Y8] (Z8) {};
   \draw[pil] (Y8) -- (X8);
   \draw[pil] (Y8) -- (Z8);
   \draw[pil] (X8) to[bend left] (Z8);
  \end{tikzpicture}
  \end{center}
  \caption{The eight possible directed acyclic graphs on three variables.} 
  \label{fig:8dags}
\end{figure}
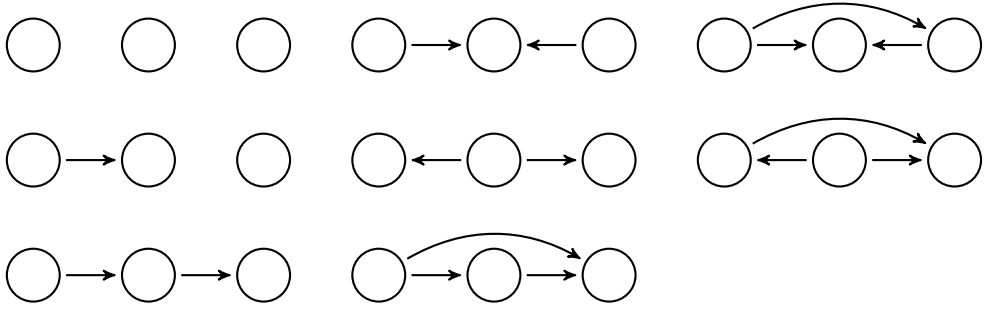

In particular, we will sample $N$ different causal DAGs $G_1, \ldots, G_N$,
where the $G_i$ describes the causal structure underlying $(X_i, Y_i, Z_i)$.
Given $G_i$, we generate the sample set $S_i = \{(x_{ij}, y_{ij}, z_{ij})\}_{j=1}^n$
according to the generative process described in Section~\ref{sec:tuebingen}.
Together with $S_i$, we annotate the triplet of labels $(l_{i1}, l_{i2}, l_{i3})$, where according to $G_i$,
\begin{itemize}
  \item $l_{i1} = +1$ if ``$X_i \to Y_i$'', $l_{i1} = -1$ if ``$X_i \leftarrow Y_i$'', and $l_{i1} = 0$ else.
  \item $l_{i2} = +1$ if ``$Y_i \to Z_i$'', $l_{i2} = -1$ if ``$Y_i \leftarrow Z_i$'', and $l_{i2} = 0$ else.
  \item $l_{i3} = +1$ if ``$X_i \to Z_i$'', $l_{i1} = -1$ if ``$X_i \leftarrow Z_i$'', and $l_{i1} = 0$ else.
\end{itemize}
Then, we add the following six elements to our training set:
\begin{align*}
(\{(x_{ij},y_{ij}, z_{ij})\}_{j=1}^n,+l_1), (\{(y_{ij},z_{ij}, x_{ij})\}_{j=1}^n,+l_2), (\{(x_{ij},z_{ij}, y_{ij})\}_{j=1}^n,+l_3),\\
(\{(y_{ij},x_{ij}, z_{ij})\}_{j=1}^n,-l_1), (\{(z_{ij},y_{ij}, x_{ij})\}_{j=1}^n,-l_2), (\{(z_{ij},x_{ij}, y_{ij})\}_{j=1}^n,-l_3),
\end{align*}
for all $1 \leq i \leq N$.  Therefore, our training set will consist on $6N$ sample sets and their paired labels. At this point, and given any sample $\{(u_{ij},
  v_{ij}, w_{ij})\}_{j=1}^n$ from the training set, we propose to use as
  feature vectors the concatenation of the $m-$dimensional empirical kernel
  mean embeddings \eqref{eq:meank3} of $\{u_{ij}\}_{j=1}^n$,
  $\{v_{ij}\}_{j=1}^n$, and $\{(u_{ij}, v_{ij}, w_{ij})\}_{j=1}^n$,
  respectively. 

\subsection{Test Phase}
To start, given $n_{te}$ test $d-$dimensional samples $S = \{(x_{1i}, \ldots,
x_{di})\}_{i=1}^{n_{te}}$, the hyper-parameters of the kernel and training data
synthesis process are transductively chosen, as described in
Section~\ref{sec:tuebingen}.

In order to estimate the causal graph underlying the test sample set $S$, we compute
three $d \times d$ matrices $M_\to$, $M_{\indep}$, and $M_{\leftarrow}$. Each of these
matrices will contain, at their coordinates $i,j$, the
probabilities of the labels ``$X_i \to X_j$'', ``$X_i \indep X_j$'', and ``$X_i
\leftarrow X_j$'', respectively, when averaged over all possible scalar confounders
$X_k$.  Using these matrices, we estimate the underlying causal graph by
selecting the type of each edge (forward, backward, or no edge) to be the one with
maximal probability according. As a post-processing step, we prune the least-confident
edges until the derived graph is acyclic. 

\end{document}